\theoremstyle{plain}
\newtheorem{theorem}{Theorem}[section]
\theoremstyle{definition}
\theoremstyle{remark}
\icmltitlerunning{Smooth Dynamic Cutoffs for Machine Learning Interatomic Potentials}
\begin{document}

\twocolumn[
  \icmltitle{Smooth Dynamic Cutoffs for Machine Learning Interatomic Potentials}



  \icmlsetsymbol{equal}{*}

  \begin{icmlauthorlist}
    \icmlauthor{Kevin Han}{cmu}
    \icmlauthor{Haolin Cong}{cmu}
    \icmlauthor{Bowen Deng}{mit}
    \icmlauthor{Amir Farimani}{cmu}
  \end{icmlauthorlist}

  \icmlaffiliation{cmu}{Carnegie Mellon University}
  \icmlaffiliation{mit}{Massachuessets Institute of Technology}
  \icmlcorrespondingauthor{Kevin Han}{kevinhan@cmu.edu}

  \icmlkeywords{Machine Learning, ICML}

  \vskip 0.3in
  
]



\printAffiliationsAndNotice{}  

\begin{abstract}
    Machine learning interatomic potentials (MLIPs) have proven to be wildly useful for molecular dynamics simulations, powering countless drug and materials discovery applications. However, MLIPs face two primary bottlenecks preventing them from reaching realistic simulation scales: inference time and memory consumption. In this work, we address both issues by challenging the long-held belief that the cutoff radius for the MLIP must be held to a fixed, constant value. For the first time, we introduce a \textbf{dynamic} cutoff formulation that still leads to stable, long timescale molecular dynamics simulation. In introducing the dynamic cutoff, we are able to induce sparsity onto the underlying atom graph by targeting a specific number of neighbors per atom, significantly reducing both memory consumption and inference time. We show the effectiveness of a dynamic cutoff by implementing it onto 4 state of the art MLIPs: MACE, Nequip, Orbv3, and TensorNet, leading to \textbf{2.26x} less memory consumption and \textbf{2.04x} faster inference time, depending on the model and atomic system. We also perform an extensive error analysis and find that the dynamic cutoff models exhibit minimal accuracy dropoff compared to their fixed cutoff counterparts on both materials and molecular datasets. All model implementations and training code will be fully open sourced.
\end{abstract}

\section{Introduction}
In recent years, machine learning interatomic potentials (MLIPs) trained on quantum chemical calculations such as density functional theory (DFT) \citep{Argaman_2000} or coupled clustering (CC) \citep{bartlett2007coupled} have been shown to be extremely useful in both materials discovery and drug discovery applications \citep{deringer2019machine}. At its most fundamental level, MLIPs, typically based on graph neural networks, input an atomic system (atom coordinates and types) and output the energy of the overall system as well as the forces applied upon each atom \citep{jacobs2025practical}. The energy and forces are then used to drive simulations such as molecular dynamics, which integrate Newton's laws of motion to determine the positions of the atoms at the next timestep \citep{larsen2017ase}. However, due to the need for stable integration of the fast oscillatory motion of atoms, the integration timestep is set at the femtosecond level, requiring millions to billions of MLIP inferences for a single simulation \citep{hollingsworth2018molecular}. As a result, even small increases in inference time can lead to days of additional simulation time. Furthermore, high-throughput drug and materials discovery workloads typically require simulating large atomic systems or large amounts of smaller atomic systems, both of which are bottlenecked by overall GPU memory \citep{harvey2012high}. These two requirements underscore the necessity for MLIPs that are both fast in inference and consume minimal GPU memory. 

\begin{figure}[t]
  \begin{center}
    \centerline{\includegraphics[clip, trim=100px 110px 80px 100px, width=\columnwidth]{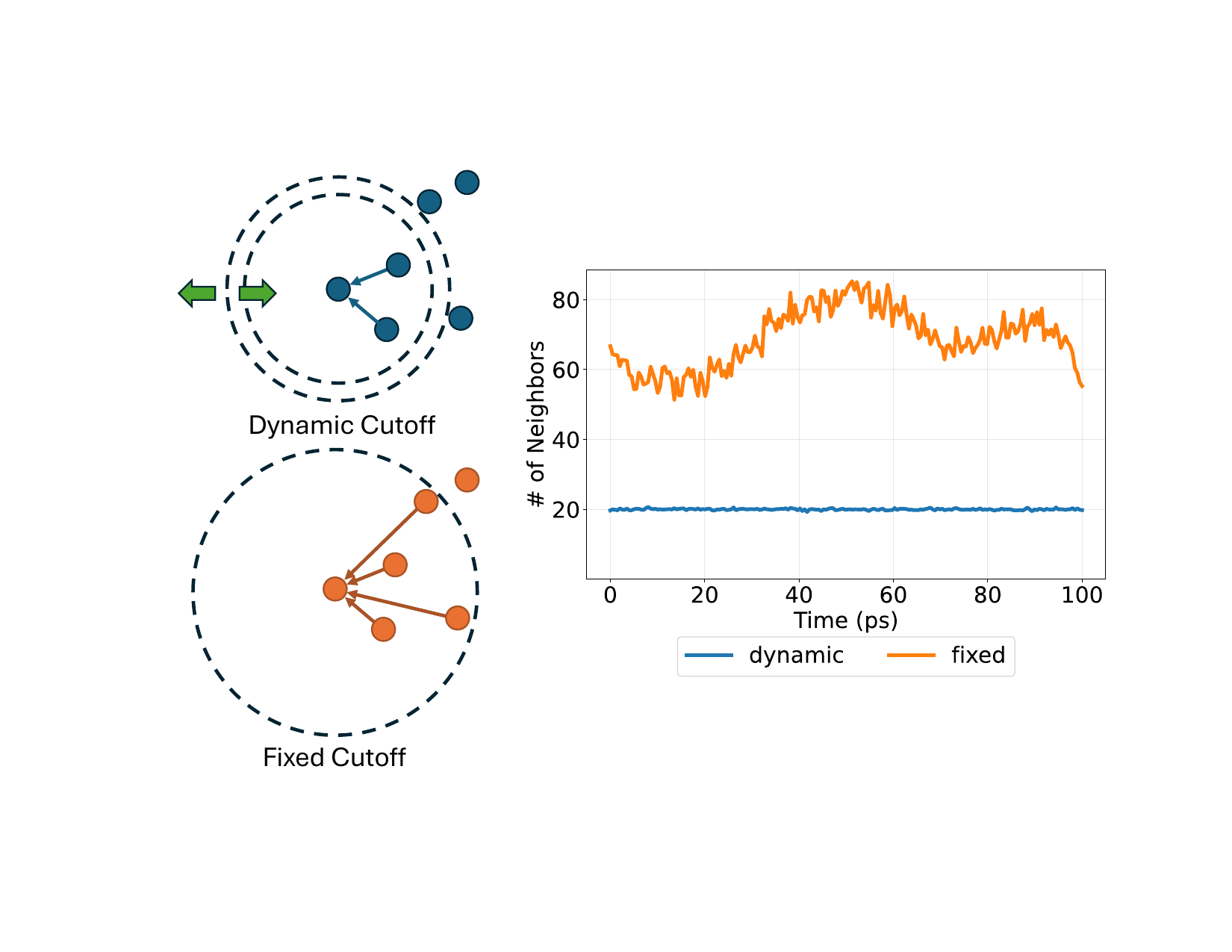}}
    \caption{
        We introduce a dynamic cutoff function which induces graph sparsity onto the underlying atom graph while maintaining simulation stability. The dynamic cutoff function calculates a dynamic radius that targets a specific number of atoms to be within the dynamic radius while pruning the rest. This sparsification method leads to up to 2.26x reduction in memory consumption and up to 2.04x reduction in inference time depending on the model and atomic system. 
    }
    \label{fig1}
  \end{center}
  \vspace{-10pt}
\end{figure}

There have been a few works focused on increasing MLIP inference and reducing memory consumption. \citet{lee2025flashtp}, \citet{tan2025nequipfast}, \citet{nvidia2024cuequivariance}, and \citet{bharadwaj2025openequivariance} implement custom CUDA kernels into specific models to accelerate expensive equivariance calculations. \citet{kong2025scalable} attempt to remove entire layers from the MLIP, but find that energy and force errors increase significantly. \citet{musaelian2023learning} introduce Allegro, a fast and memory efficient MLIP. However, Allegro is restricted to only modeling local, short-range interactions, which may not be sufficient to accurately capture complex atomistic dynamics for many systems \citep{anstine2023longerange, leimeroth2025machine}. 

In graph-based MLIPs, the underlying atom graphs are constructed by searching each atom's neighbors within a \textit{preset, fixed} cutoff distance. One promising avenue for reducing inference time and memory consumption lies in restricting the maximum number of neighbors for each atom to the $n$ nearest neighbors -- inducing graph sparsity to the atom graph \citep{kozinsky2023allegroscaling, rhodes2025orbv3}. Only including the $n$ nearest neighbors significantly reduces the number of total edges and arithmetic operations during model inference. The inclusion of specifically the $n$ \textit{nearest} neighbors is grounded in the decaying interaction strength between two atoms as the distance between them increases, such that the nearest neighbors can capture the majority of the interactions \citep{jones1924determination}. However, enforcing a maximum neighbor restriction explicitly leads to discontinuities in the potential energy surface (PES), as seen in Figure \ref{fig:surfaces}, resulting in grossly unstable molecular dynamics simulation \citep{fu2022forces, fu2025learning}.

In this work, we present a smooth and physically stable variant of the maximum neighbor restriction by challenging the long-held notion that the cutoff radius for an MLIP must be fixed. We propose a dynamic cutoff function that maintains the higher-order differentiability of the MLIP while making sure that the atoms within the dynamic cutoff roughly hold a target $n$ number of atoms.

We implement the dynamic cutoff on 4 popular and state-of-the-art MLIP models: MACE, Nequip, Orbv3, and TensorNet. Ablations on the MD22 (molecular systems) and MatPES (material systems) datasets show that \textbf{an MLIP trained with a dynamic cutoff results in minimal accuracy reduction while simultaneously consuming up to 2.26x less memory and up to 2.04x less inference time} depending on model and atomic system.

\section{Related Work}
Modern machine learning interatomic potentials (MLIPs) are typically graph-based neural networks trained on the energy and forces calculated from expensive first-principles, quantum mechanical methods \citep{musaelian2023learning, batzner2023nequip, batatia2022mace, deng2023chgnet, simeon2023tensornet, esen, wood2025uma, rhodes2025orbv3, park2024sevennet}. MLIP models are then used as surrogates for expensive quantum mechanical calculations for molecular dynamics simulations, enabling the study of atomistic systems ranging from battery cathode materials, to protein systems, to metal organic frameworks \citep{hollingsworth2018molecular, zhong2025modeling, greathouse2006interaction}. However, MLIPs are still computationally far from simulating atomistic systems of real-world scale within a reasonable amount of time \citep{han2025distmlip}.

There is growing research interest in increasing MLIP inference speed and reducing memory consumption. For equivariant MLIPs such as Nequip, MACE, and SevenNet, custom CUDA kernels have been created for expensive tensor products and spherical convolutions -- leading to nontrivial memory consumption reduction and inference speed gains \citep{lee2025flashtp, tan2025nequipfast, nvidia2024cuequivariance, bharadwaj2025openequivariance}. Additionally, in efforts to reduce the overall FLOP count for inference, \citet{kong2025scalable} prune entire layers of some MLIPs; however, they notice a substantial few-fold increase in error after doing so. On the other hand, DistMLIP presents a distributed inference platform allowing various MLIPs to perform memory-efficient inference on multiple GPUs, leading to substantial inference speedup and simulation capacity increase  \citep{han2025distmlip}. \citet{rhodes2025orbv3} notices that inducing sparsity over the underlying atom graph by instituting a maximum neighbor limit of the 20 nearest neighbors leads to a strong speed and memory improvement. However, along with \citet{fu2025learning}, they show that the resulting MLIP has a non-smooth PES, as seen in Figure \ref{fig:surfaces}, leading to highly unstable and un-usable molecular dynamics simulation.

\citet{fu2025esen} and \citet{hairer2003geometric} note and prove that the stability, and subsequent usability, of the molecular dynamics simulation requires the PES to be at least twice-differentiable -- with the numerical error bound on the energy drift of the simulation decreasing substantially with the order of differentiability of the PES.

There has been little prior work investigating the cutoff radius for MLIPs. Classical molecular dynamics, which don't involve machine learned models, have predefined fixed cutoffs depending on the elements of the system being simulated \citep{element_cutoffs}. \citet{matlantis} introduce an MLIP with a different fixed cutoff for each layer of the model (e.g. 3$\si{\angstrom}$ for the first layer, 4$\si{\angstrom}$ for the second, etc.).

In this work, \textbf{our goal is to reap the benefits of inducing graph sparsity while still maintaining stable simulation} To this end, we challenge the long-held belief that MLIPs must maintain a fixed cutoff radius in order to maintain a smooth PES, and subsequently, support a stable molecular dynamics simulation. We design a dynamic cutoff function that maintains the higher-order differentiability of the MLIP while maintaining the number of atoms within the cutoff to hover around a pre-specified target number of neighbors. This allows us to induce graph sparsity and set a "soft maximum neighbor" count while still maintaining smoothness of the energy surface as well as stable molecular dynamics simulation. A depiction of the dynamic cutoff can be found in Figure \ref{fig1}.

\section{Dynamic Cutoff Formulation}
Let $v$ be a node, representing an atom, and $N_v$ be the set of incoming neighboring nodes to $v$ within a hard cutoff radius of $h$. Let $u$ be another node such that $u \in N_v$, and let $r_{uv}$ describe the distance of the edge from atom $u$ to atom $v$.

For all $u \in N_v$, we define a rank $R_u$ where
\begin{equation}
    \label{eq:ranking}
    R_u = \sum_{t \in N_v \setminus \{u\}} \left[ S (\alpha * (r_{uv} - r_{tv}))p\left(\frac{r_{tv}}{h}\right) \right]
\end{equation}
where $S$ is the sigmoid function, $\alpha \in \mathbb{R}^+$, and $p : \mathbb{R} \to \mathbb{R}$ is a polynomial envelope function that smoothly decays to 0 when $\frac{r_{tv}}{h}$ goes to 1. One mathematical form of $p$ was introduced in \citet{gasteiger2020directional}, defined as

\begin{equation*}
    \begin{aligned}
        p(x) =\;& 1
        - \frac{(n + 1)(n + 2)}{2} x^n \\
        &+ n(n + 2) x^{n + 1}
        - \frac{n(n + 1)}{2} x^{n + 2}
    \end{aligned}
\end{equation*}

for some $n \in \mathbb{N}^+$ where $n \geq 3$. Note that, by design, $p(1) = 0$, $p'(1) = 0$, and $p''(1) = 0$. Equation \ref{eq:ranking} can be interpreted as the soft rank of node $u$ with respect to all other neighbors $t \in N_v$. The sigmoid behaves as a smooth ``indicator" function for the relative ordering of $u$ and $t$ while the summation over indicators leads to a smooth ranking of $u$ within $N_v$. The polynomial envelope function $p$ is used to prevent a sudden jump in all rankings when an atom in $N_v$ leaves the hard cutoff $h$. $\alpha$ determines the sharpness of the indicator function while $n$ determines the steepness of the polynomial envelope. These values are typically fixed at 10 and 50, respectively. We perform the rankings in Equation \ref{eq:ranking} for all nodes $v$ in the atom graph $G$. Additional discussion on neighbor ranks can be found in Appendix \ref{sec:neighbor_ranks}.

Next, we use the PDF of the normal distribution to define a smooth weighting function over ranks. We define $\omega : \mathbb{R} \to \mathbb{R}$ as
\begin{equation}
\label{eq:weighting}
    \omega(x) = \frac{1}{\sigma\sqrt{2\pi}}e^{-\frac{1}{2}\left(\frac{x - \mu}{\sigma}\right)^2}
\end{equation}
where $\mu, \sigma \in \mathbb{R}^+$. $\omega$ is interpreted as a symmetric weighting function for each $u \in N_v$ based on $u$'s ranking. As a result, $\mu$ determines the target average number of neighbors we would like to have within the cutoff while $\sigma$ determines the tolerance, or variation, around the average number of neighbors. We find $\sigma$ of 4 to work well in maintaining minimal variation across diverse chemistries.

Finally, we calculate the dynamic cutoff, $c_v$ for node $v$ using a weighted average as follows:
\begin{equation}
    c_v = f(N_v) = \frac{\left[\sum\limits_{u \in N_v} \omega(R_u)p\left(\frac{r_{uv}}{h}\right)r_{uv}\right] + h\epsilon}{\left[\sum\limits_{u \in N_v} \omega(R_u)p\left(\frac{r_{uv}}{h}\right)\right] + \epsilon}
\end{equation}

where $\epsilon$ refers to a small value such as 1e-4 and is used to maintain numerical stability. When the final neighboring atom $u$ leaves the hard cutoff $h$, the dynamic cutoff pushes towards $h$ via the $h\epsilon$ in the numerator of the weighted sum. By design, $c_v$ can range from 0 to $h$ depending on the locations of its neighbors $N_v$. The $p(\frac{r_{uv}}{h})$ terms in the numerator and denominator of the weighted sum maintain smoothness in the weighting function when an atom $u$ leaves the hard cutoff, $h$. Furthermore, provided two atoms $a$ and $b$, depending on the local density of the atomic system, it is possible for $a$ to have an edge to $b$ but $b$ to not have an edge to $a$. In other words, implementing a dynamic cutoff, or any maximum neighbor implementation, leads to a potentially asymmetric adjacency matrix depending on the atomic configuration. 

Empirically, we find the dynamic cutoff function to robustly and reliably maintain, on average, the target number of neighbors $\mu$ within the dynamic cutoff, with little variation. However, due to the smooth nature of the dynamic cutoff, there is no guarantee that each atom will have exactly $\mu$ neighbors at all points during simulation. 

\begin{figure}[h]
  \begin{center}
    \centerline{\includegraphics[clip, trim=62px 180px 45px 200px, width=\columnwidth]{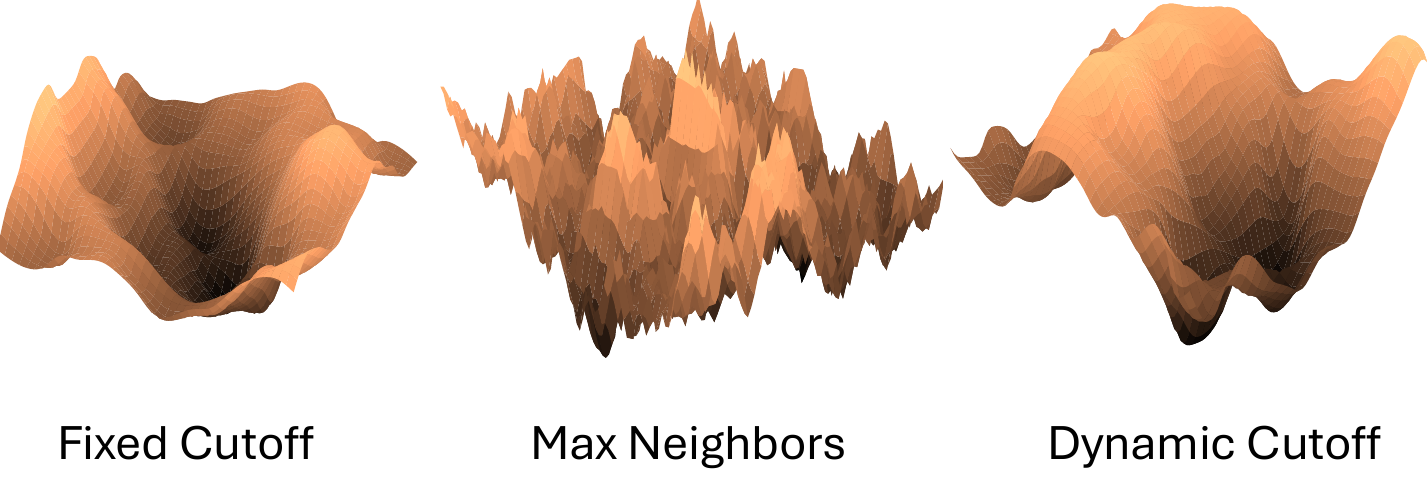}}
    \caption{
        The potential energy surface (PES) of a fixed cutoff, max neighbors, and dynamic cutoff TensorNet model on a random system. The fixed cutoff baseline and dynamic cutoff energy surfaces exhibit smooth characteristics while the energy surface resulting from setting a strict maximum neighbor is highly jagged and unsmooth. The PES associated with the maximum neighbor model leads to highly unstable and un-usable molecular dynamics simulation while the the fixed cutoff and dynamic cutoff models lead to stable simulation. 
    }
    \label{fig:surfaces}
  \end{center}
\end{figure}

In practice, gradients must flow from the energy prediction to the atomic positions through the cutoff as well as the model weights. An initial neighbor list construction step is performed using a standard fixed cutoff at $h$ before the dynamic cutoff $c_v$ is used to prune out neighbors beyond $c_v$ in the graph and serve as the new cutoff for weighting incoming messages.

In Figure \ref{fig:surfaces}, we show the PES of a traned TensorNet fixed cutoff (baseline) model, dynamic cutoff model, and naive max neighbors version. The dynamic cutoff and fixed cutoff models show smooth energy surfaces, while the strict max neighbors model leads to a highly jagged, unusable, and unphysical surface.

As shown by \citet{hairer2003geometric} and \citet{esen}, the bounds on the energy drift for a simulation is a function of the order of differentiability of the PES, with second-order differentiability being the minimum requirement (i.e. forces must be differentiable) for stable simulation. We provide a proof of the second-order differentiability of the dynamic cutoff as well as corroborate its simulation stability by performing long timescale simulations across both molecular and high temperature material systems in Section \ref{sec:stability}.

\begin{theorem}
    Given an arbitrary atomic system with arbitrary movement of atoms as well as an MLIP $f$ that is at least second-order differentiable, the graph constructed using the cutoffs from the dynamic cutoff function $C$ always results in an energy surface that is second-order differentiable with respect to the position of each atom.
\end{theorem}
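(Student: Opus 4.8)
The plan is to present the predicted energy as a finite composition of $C^\infty$ maps and the assumed $C^2$ map $f$, so that the conclusion follows from the chain rule, with the only delicate point being the changes in graph topology that occur as atoms cross the hard cutoff $h$ or are pruned at a dynamic cutoff $c_v$. First I would fix the natural domain: configurations $\{\mathbf{x}_i\}$ for which no two atoms coincide, so that every pairwise distance $r_{uv}$ is a $C^\infty$ function of the positions, bounded away from $0$. On this domain the sigmoid $S$, the Gaussian weight $\omega$ from Eq.~\eqref{eq:weighting}, and the envelope $p$ are all $C^\infty$, hence every \emph{individual summand} occurring in $R_u$ (Eq.~\eqref{eq:ranking}) and in the numerator and denominator of $c_v$ is $C^\infty$ in the positions — as long as the corresponding neighbor is present in $N_v$.

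The second step, which I expect to be the main obstacle, is to handle the entry and exit of a node at the hard cutoff $h$. The summations defining $R_u$ and $c_v$ change only by gaining or losing a term as some $r_{tv}$ crosses $h$. Every such term carries a factor $p(r_{tv}/h)$, and by construction $p(1)=p'(1)=p''(1)=0$; the remaining factors ($S(\cdot)$, $\omega(\cdot)$, $r_{uv}$, and in particular $R_u$ itself, whose own summands stay bounded near $r_{uv}=h$) together with their first two position-derivatives remain bounded. Hence each such term, and its first and second derivatives, tend to $0$ as $r_{tv}\to h^-$, so extending the summand by $0$ for $r_{tv}\ge h$ produces a function that is $C^2$ across $r_{tv}=h$ — the elementary fact that a piecewise-$C^2$ function whose value and first two derivatives agree (here, all vanish) at the interface is globally $C^2$. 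The bulk of the write-up is simply checking this bound-and-match claim for each of the four places a term can switch on or off (the $R_u$ sum, the numerator of $c_v$, the denominator of $c_v$, and, below, the message aggregation inside $f$). It follows that $R_u$ and both parts of $c_v$ are $C^2$ in all positions; since the denominator is $\ge\epsilon$ and the numerator is $\ge h\epsilon$ while the denominator is bounded above, the quotient $c_v$ is $C^2$ and takes values in a compact subset of $(0,h]$, so $r_{uv}/c_v$ is likewise $C^2$.

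Third, I would treat the pruning of an edge at $r_{uv}=c_v$. The dynamic cutoff enters $f$ only through the message-weighting envelope, which — exactly as in any MLIP whose fixed-cutoff PES is $C^2$ — multiplies the (bounded, $C^2$) radial/angular edge features by a function $\phi$ of $r_{uv}/c_v$ with $\phi(1)=\phi'(1)=\phi''(1)=0$ (one may take $\phi=p$). Viewing the energy as $f$ evaluated on the complete graph with each pair weighted by $\phi(r_{uv}/c_v)$, pruning is mathematically inert: a pruned pair, and every pair with $r_{uv}\ge h$, simply carries weight $0$. By the same second-order-flatness argument, and because $r_{uv}/c_v$ is a $C^2$ map of the positions with image in a compact subset of $(0,\infty)$, each neighbor's contribution to the message received by $v$ is $C^2$ in the positions and vanishes to second order at the pruning interface; summing finitely many such contributions preserves this. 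Therefore the entire input to $f$ — all pairwise distances together with all edge weights — is a $C^2$ function of the atomic positions.

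Finally, composing this $C^2$ input map with the assumed $C^2$ potential $f$ shows that the predicted energy $E$ is $C^2$ with respect to every atom's position; the force $-\nabla_{\mathbf{x}}E$ is then $C^1$, which is precisely the regularity invoked by \citet{hairer2003geometric} and \citet{esen} for stable symplectic integration. Beyond the bound-and-match bookkeeping of the second step, the remaining work is routine: confirming the dependency chain $r_{uv}\to R_u\to c_v\to \phi(r_{uv}/c_v)\to f$ is acyclic (note $R_u$ and $c_v$ depend only on the hard-cutoff neighborhood, never on $c_v$), and that the two $\epsilon$-regularizations keep $c_v$ uniformly inside a compact subinterval of $(0,h]$ so that no division by a vanishing quantity ever occurs.
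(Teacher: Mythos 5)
Your proposal is correct and follows essentially the same route as the paper's proof: both reduce the problem to showing that $c_v$ (and hence $r_{uv}/c_v$ inside the message envelope) is $C^2$, and both handle neighbor entry/exit by exploiting that every term gained or lost carries a factor of $p$ that vanishes together with its first two derivatives at the interface, so the piecewise definitions match to second order. The only difference is presentational — you invoke a general bound-and-match lemma where the paper writes out the gradients and Hessians of the numerator and denominator explicitly and checks term-by-term that they vanish at $r_{tv}=h$.
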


\begin{proof}
    See Appendix \ref{sec:proof} for the full proof.
\end{proof}

\section{Results}
\subsection{Experimental Setup}
We implement the dynamic cutoff function into 4 popular, state-of-the-art MLIP models widely used for stable molecular dynamics simulation: MACE, Nequip, Orbv3, and TensorNet. MACE, Nequip and TensorNet are equivariant models while Orbv3 is invariant. 

To compare the training performance between the dynamic cutoff and fixed cutoff models, we train both versions of each of the 4 models on the MatPES and MD22 datasets. Notably, all models are trained using the same hyperparameters between the fixed cutoff and dynamic cutoff versions. We did not perform any hyperparameter tuning and primarily rely on the hyperparameter defaults of each model for each dataset. We report comparisons between the fixed cutoff and the dynamic cutoff versions of each model without changing any other setting. All ablations are performed on smaller versions of each of the 4 models, ranging from 200k to 500k parameters. All hyperparameter details are outlined in Appendix \ref{sec:hyperparams}. For material systems, we choose the target average number of neighbors, $\mu$, to be 40, and for molecular systems, we set $\mu$ to be 20. Empirically, we find these settings to sufficiently capture the neighboring atomic environments while still providing strong inference speed acceleration and memory reduction when simulating real-world systems.

\subsection{Molecules}
\label{sec:molecules}
We trained each of the 4 models on all 7 large molecules in the MD22 dataset \citep{chmiela2023md22}. The MD22 dataset consists of 7 molecular dynamics trajectories calculated using high-fidelity DFT. We use the same training and validation splits outlined in the original MD22 paper.

The resulting energy and force mean absolute errors (MAEs) on the validation set can be found in Table \ref{tab:md22_results}. Energy error and force error units are in meV/atom and meV/$\si{\angstrom}$ respectively. Although the dynamic cutoff version of the model oftentimes shows larger errors, the errors are typically bounded within 0.1-0.2 meV/atom for energies and a 2-3 meV/$\si{\angstrom}$ for forces depending on the system and model. 

\begin{table*}[t]
  \caption{Energy and force MAEs for both the fixed cutoff and dynamic cutoff versions of the MACE, Nequip, Orbv3, and TensorNet models trained on the MD22 dataset. Note that these are smaller versions of the models (between 200k and 500k parameters, see Appendix \ref{sec:hyperparams} for details). Units are in meV/atom and meV/$\si{\angstrom}$ for energy and forces respectively. We perform no hyperparameter tuning between the fixed and dynamic cutoff versions of the models. The dynamic cutoff model energies are consistently within 0.1-0.2 meV/atom of the fixed cutoff models. The dynamic cutoff forces are also consistently within 2-3 meV/$\si{\angstrom}$ of the fixed cutoff models. The target neighbor of neighbors, $\mu$ was set to 20 for the MD22-trained models.}
  \label{tab:md22_results}
  \begin{center}
    \begin{small}
      \begin{sc}
        \setlength{\tabcolsep}{3pt}
        \begin{tabular}{lllccccccc}
          \toprule
          Model & Cutoff & 
          & Ac-Ala3-NHMe
          & AT-AT
          & AT-AT-CG-CG
          & DHA
          & Buckyball
          & Stachyose
          & DWNT \\
          \midrule
          \multirow{4}{*}{MACE}
            & \multirow{2}{*}{Fixed}
              & Energy & 0.217 & 0.212 & 0.243 & 0.247 & 0.434 & 0.239 & 0.247 \\
            & & Force  & 14.28 & 17.35 & 24.37 & 13.95 & 24.88 & 20.68 & 35.60 \\
            & \multirow{2}{*}{Dynamic}
              & Energy & 0.264 & 0.286 & 0.308 & 0.277 & 0.265 & 0.269 & 0.291 \\
            & & Force  & 15.03 & 20.13 & 27.26 & 15.05 & 27.71 & 21.81 & 41.22 \\
          \midrule
          \multirow{4}{*}{Nequip}
            & \multirow{2}{*}{Fixed}
              & Energy & 0.396 & 0.438 & 0.785 & 0.370 & 0.392 & 0.681 & 0.434 \\
            & & Force  & 17.30 & 21.77 & 29.92 & 13.79 & 27.75 & 24.89 & 40.59 \\
            & \multirow{2}{*}{Dynamic}
              & Energy & 0.607 & 0.867 & 0.971 & 0.607 & 0.373 & 0.823 & 0.399 \\
            & & Force  & 17.52 & 22.11 & 30.35 & 14.53 & 30.35 & 26.45 & 47.02 \\
          \midrule
          \multirow{4}{*}{Orbv3}
            & \multirow{2}{*}{Fixed}
              & Energy & 0.347 & 0.265 & 0.270 & 0.229 & 0.564 & 0.457 & 0.282 \\
            & & Force  & 12.57 & 21.42 & 30.94 & 14.59 & 22.94 & 16.29 & 23.81 \\
            & \multirow{2}{*}{Dynamic}
              & Energy & 0.477 & 0.347 & 0.278 & 0.245 & 0.390 & 0.520 & 0.283 \\
            & & Force  & 13.87 & 22.07 & 30.78 & 15.44 & 30.35 & 16.39 & 26.02 \\
          \midrule
          \multirow{4}{*}{TensorNet}
            & \multirow{2}{*}{Fixed}
              & Energy & 6.374 & 1.084 & 1.068 & 0.604 & 3.29  & 1.259 & 6.18 \\
            & & Force  & 25.93 & 43.40 & 59.84 & 25.74 & 18.33 & 40.96 & 33.40\\
            & \multirow{2}{*}{Dynamic}
              & Energy & 6.071 & 1.037 & 1.143 & 0.742 & 3.30  & 1.201 & 6.29 \\
            & & Force  & 27.44 & 43.50 & 61.79 & 26.11 & 76.62 & 43.46 & 40.76\\
          \bottomrule
        \end{tabular}
      \end{sc}
    \end{small}
  \end{center}
\end{table*}

\subsection{Materials}
\label{sec:materials}
We also trained each of the 4 models on the MatPES-r2scan dataset, a challenging foundational materials dataset consisting of $\sim$400,000 structures from room temperature molecular dynamics simulation \citep{kaplan2025matpes}. We train on 80\% of the data and validate on the other 20\%. Validation MAEs can be found in Table \ref{tab:matpes_results}. Energy errors are in meV/atom while force errors are in meV/$\si{\angstrom}$. Note that we perform no hyperparameter tuning between the fixed and dynamic cutoff versions of the model. The dynamic cutoff versions of the model show minimal error increase compared to the fixed versions of the model. The target number of neighbors, $\mu$, is set to 40 for this dataset.

\begin{table}[t]
  \caption{The energy (meV/atom) and force (meV/$\si{\angstrom}$) MAEs for the MACE, Nequip, Orbv3, and TensorNet models on the challenging MatPES-r2scan dataset. Note that these are smaller versions of the models (between 200k and 500k parameters, see Appendix \ref{sec:hyperparams} for details). The target number of neighbors, $\mu$, is 40 for all dynamic cutoff models. }
  \label{tab:matpes_results}
  \begin{center}
    \begin{small}
      \begin{sc}
        \begin{tabular}{llcc}
          \toprule
          Model & Metric & Fixed & Dynamic \\
          \midrule
          \multirow{2}{*}{MACE}
            & Energy  & 30 & 31 \\
            & Forces   & 173 & 190 \\
          \midrule
          \multirow{2}{*}{Nequip}
            & Energy  & 59 & 70 \\
            & Forces   & 167 & 168 \\
          \midrule
          \multirow{2}{*}{Orbv3}
            & Energy & 67 & 68 \\
            & Forces  & 176 & 177 \\
          \midrule
          \multirow{2}{*}{TensorNet}
            & Energy  & 39 & 43 \\
            & Forces   & 152 & 160 \\
          \bottomrule
        \end{tabular}
      \end{sc}
    \end{small}
  \end{center}
  \vskip -0.1in
\end{table}

\subsection{Simulation stability}
\label{sec:stability}
The naive solution to implementing a maximum neighbor limit, setting the cutoff for node $v$ to the distance of the $n + 1$st neighbor, introduces non-differentiable cusps within the PES. These cusps lead to a highly unstable and unusable simulation, which is described in \citet{esen} and \citet{rhodes2025orbv3}. Our dynamic cutoff, on the other hand, is provably second-order differentiable while still targeting the specified target number of neighbors, $\mu$.

In Figure \ref{fig:sim_stability}, we demonstrate the simulation stability across all 4 models in which we implement the dynamic cutoff function. We plot the energy drift of the total energy in meV/atom over a 100 ps molecular dynamics (MD) simulation under constant number of particles, volume and energy (NVE ensemble) -- the classic way to test energy conservation of a given force field \cite{fu2022forces}. The NVE-MD simulations were performed on a double walled nanotube system from the MD22 dataset (left of Fig. \ref{fig:sim_stability}) and a LiFePO4 supercell consisting of 224 atoms (right of Fig. \ref{fig:sim_stability}). For the simulation of the double walled nanotube, we follow \citet{esen} and first perform 500 steps of FIRE relaxation before randomly initializing the temperature of the system to 400K and running 100 ps of NVE simulation \citep{bitzek2006firerelaxation}. For the simulation of LiFePO4, in order to induce a large number of neighbor switches to stress test the dynamic cutoff stability, we initialize velocities to 3000K, perform NVT at 3000K for 10 ps, and then run NVE for 100 ps. All models show no large-scale, systemic drift across the entire simulation -- corroborating the stability of the dynamic cutoff. 

\begin{figure}[h]
  \begin{center}
    \centerline{\includegraphics[clip, trim=35px 0px 10px 40px, width=0.9\columnwidth]{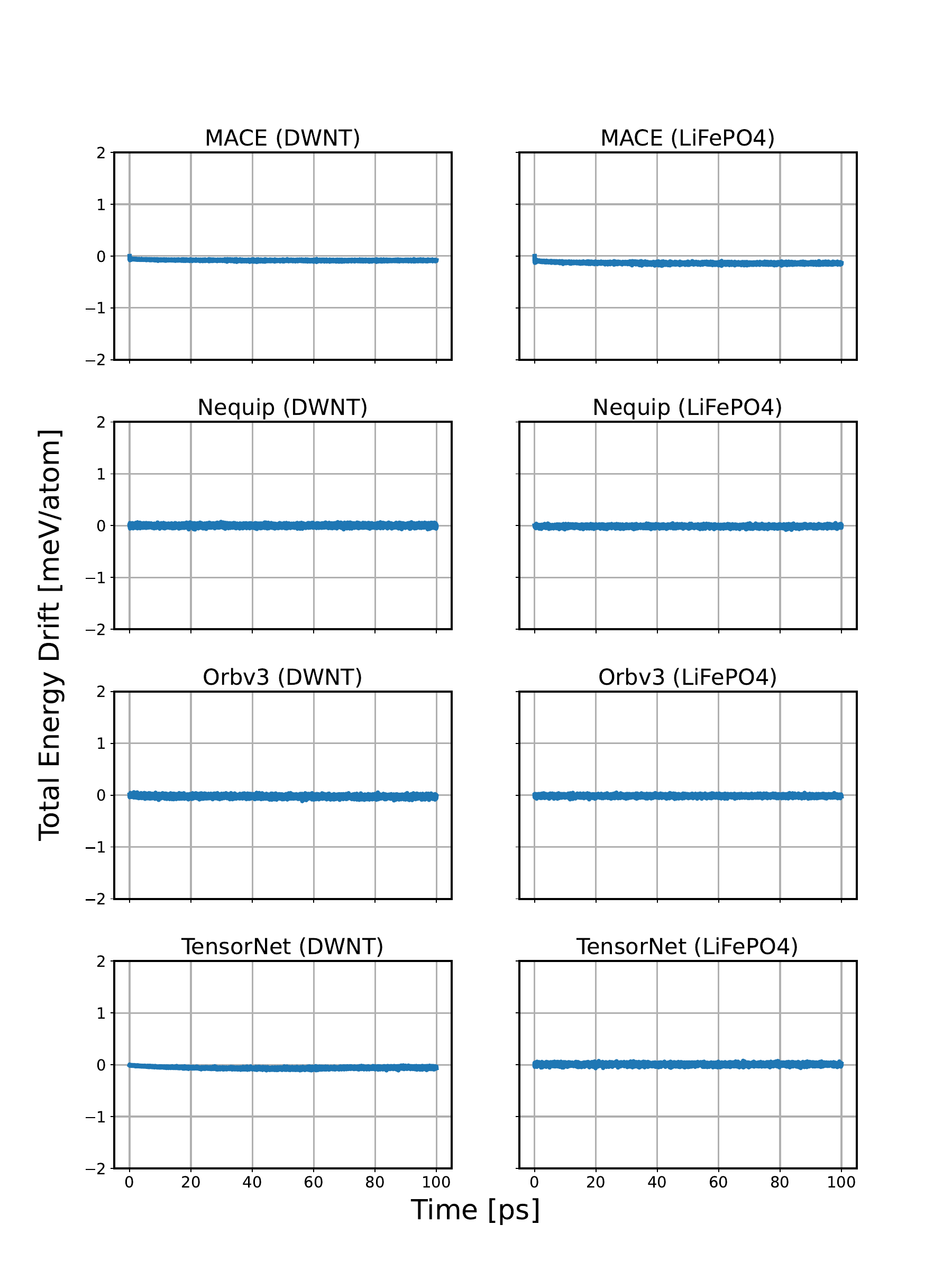}}
    \caption{
    Plots showing the total energy drift in meV/atom of a 100 ps NVE molecular dynamics simulation using the dynamic cutoff models. DWNT refers to the double walled nanotube system from the MD22 dataset. LiFePO4 refers to an LiFePO4 supercell simulated at 3000K in order to stress test the dynamic cutoff by inducing a large number of neighbor switches throughout the simulation. Besides standard perturbations within the energy due to numerical integration, there exists no systemic drift throughout the entirety of the simulation.
    }
    \label{fig:sim_stability}
  \end{center}
  \vspace{-20pt}
\end{figure}

\subsection{Memory and inference time reduction}
The goal of the dynamic cutoff is to reduce memory consumption and inference time of the simulation. This leads to increased throughput of high throughput materials/drug screening workflows, accelerated simulation times, and increased simulation cell size. We benchmark the models trained in Section \ref{sec:molecules} and Section \ref{sec:materials} on 2 real world systems -- a metal organic framework (MOF) material supercell consisting of 7500 atoms as well as a large biomolecule consisting of 16926 atoms. The MOF used was H4Pb(C2O3)2 while the protein used was 1ADO from the protein databank, a rabbit muscle protein \citep{bank1971protein}. The target number of neighbors, $\mu$, was 40 and 20 respectively, aligning with the choices of $\mu$ for the MatPES (material) and MD22 (molecule) datasets. The timing and memory consumption results are found in Figure \ref{fig:performance}. Due to TensorNet's higher memory requirements, the MOF and protein systems were cut into 1/3rd in order to perform benchmarking. For fair comparison, the overall edge density was maintained with the original systems. To explicitly measure inference time acceleration of the model, we follow \citet{wood2025uma} and \citet{rhodes2025orbv3} and exclude graph construction time in the inference time benchmark, which is negligible for most models. 

Using the dynamic cutoff, the total number of edges in the material and molecular systems are reduced from 677000 to 297000 and 805040 to 588141 respectively. Although Equation \ref{eq:ranking} scales quadratically with the number of neighbors, the number of neighbors is effectively bounded by the maximum density of naturally occurring atomic systems of interest \citep{wood2025uma, rhodes2025orbv3}. As a result, the time and memory required to calculate the dynamic cuotff are consistently less than 1\% of the total inference time and memory consumption. Most computation time is spent during forward convolutions; however, nonzero overhead within initial graph featurization and aggregation exists, leading to suboptimal memory and inference time scaling with respect to the number of edges.

\begin{figure}[h]
  \begin{center}
    \centerline{\includegraphics[clip, trim=220px 100px 220px 100px, width=\columnwidth]{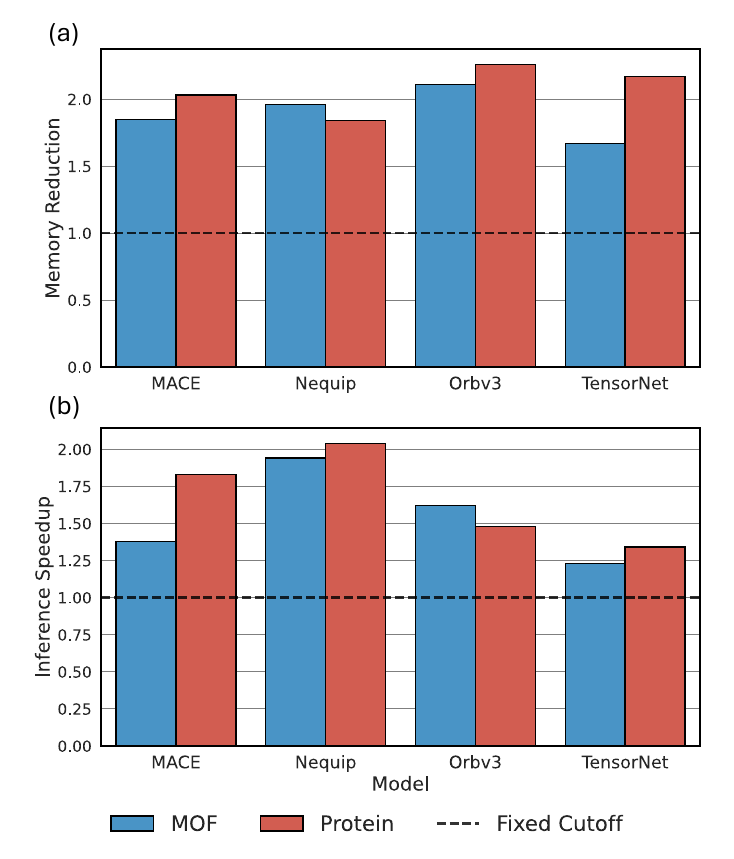}}
    \caption{
    (a) A plot of the memory reduction (higher is beter) of the MACE, Nequip, Orbv3, and TensorNet models when using a dynamic cutoff compared to a fixed cutoff for a periodic material (MOF) and biomolecular protein system. The target number of neighbors, $\mu$ is 40 and 20 for the material and molecular systems respectively. (b) A plot of the inference time speedup (higher is better) for each of the 4 models on the two systems relative to the fixed cutoff inference time. 
    }
    \label{fig:performance}
  \end{center}
  \vspace{-28pt}
\end{figure}

\subsection{Error Analysis}
\begin{figure}[h]
  \begin{center}
    \centerline{\includegraphics[clip, trim=0px 0px 25px 0px, width=\columnwidth]{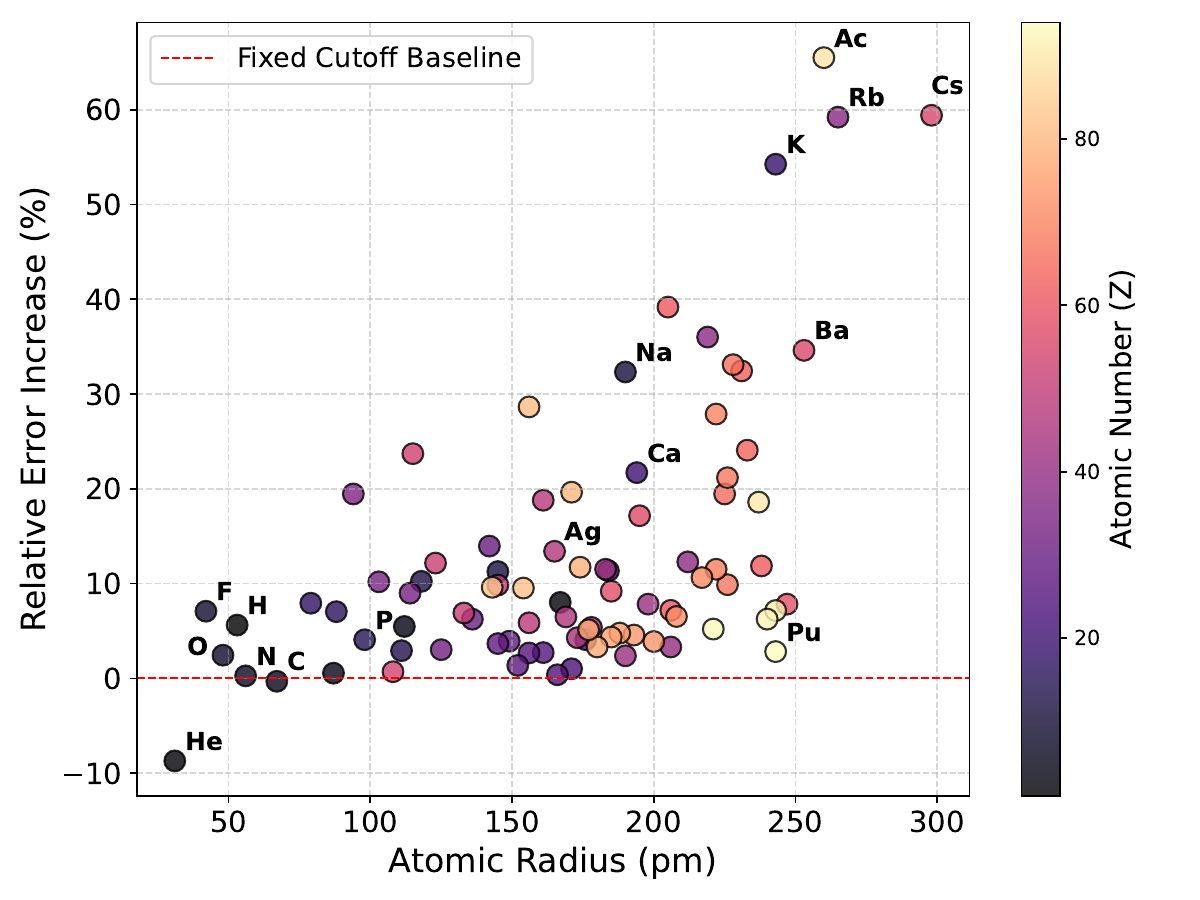}}
    \caption{
        The relative force MAE increase of the dynamic cutoff TensorNet model trained on the MatPES dataset compared its fixed cutoff counterpart. The relative error increase is plotted with respect to the Clementi calculated atomic radius.
    }
    \label{fig:element_errs_parity}
  \end{center}
  \vspace{-20pt}
\end{figure}
Since the dynamic cutoff prioritizes the inclusion of the nearest neighbors, we perform an analysis on how the errors of the dynamic cutoff model increase when compared to its fixed cutoff variant. We take the TensorNet models trained on the MatPES dataset (in Section \ref{sec:materials}) and compare their per-element force errors on the validation set. We plot the the relative increases in per-element force MAEs with respect to the Clementi calculated atomic radii of the elements in Figure \ref{fig:element_errs_parity} \citep{clementi1963atomic}. We observe that the largest electropositive atoms with the longest bond lengths in the MatPES dataset (potassium, rubidium, cesium, and actinium) have the highest increase in force errors. Furthermore, the general trend is for atoms with larger atomic radii, such as sodium and barium, to have larger increases in force errors compared to smaller elements such as helium or carbon. This is most likely a result of the dynamic cutoff being unable to fully capture all atoms within its hard cutoff, $h$, when there are less than $\mu$ atoms within the hard cutoff neighborhood. This behavior can be seen in Figure \ref{fig:melt_analysis_full}, where as the number of neighbors approaches and reduces below $\mu$, the number of atoms within the dynamic cutoff also begins to drop slightly. However, this issue can be alleviated by simply increasing the hard cutoff $h$ while maintaining $\mu$. The full plot outlining each of the per-element force errors can be found in Appendix \ref{sec:per_element_mae}.

\subsection{Choosing $\mu$}
We show that increasing the target number of neighbors increases model accuracy in Figure \ref{fig:scaling_mu}. In the figure, we train several iterations of the same TensorNet model (456k parameters) using the same hyperparameters as Section \ref{sec:molecules} and \ref{sec:materials}. The hard cutoff $h$ was set to 6$\si{\angstrom}$ while the the target number of neighbors, $\mu$, was scaled from 5 to 40. We also include the accuracy of the model using a fixed cutoff of 6$\si{\angstrom}$ for comparison to a baseline. Both of the $\mu=40$ models converge to their fixed radius baselines with the buckyball-catcher $\mu=40$ model outperforming the fixed radius baseline. Understandably, increasing the average number of neighbors, $\mu$, directly correlated with more accurate models. The average number of neighbors under the 6$\si{\angstrom}$ fixed cutoff models was around 47 and 64 for the buckyball catcher and double walled nanotube systems respectively.

\begin{figure}[!h]
  \begin{center}
    \centerline{\includegraphics[clip, trim=0px 0px 0px 0px, width=\columnwidth]{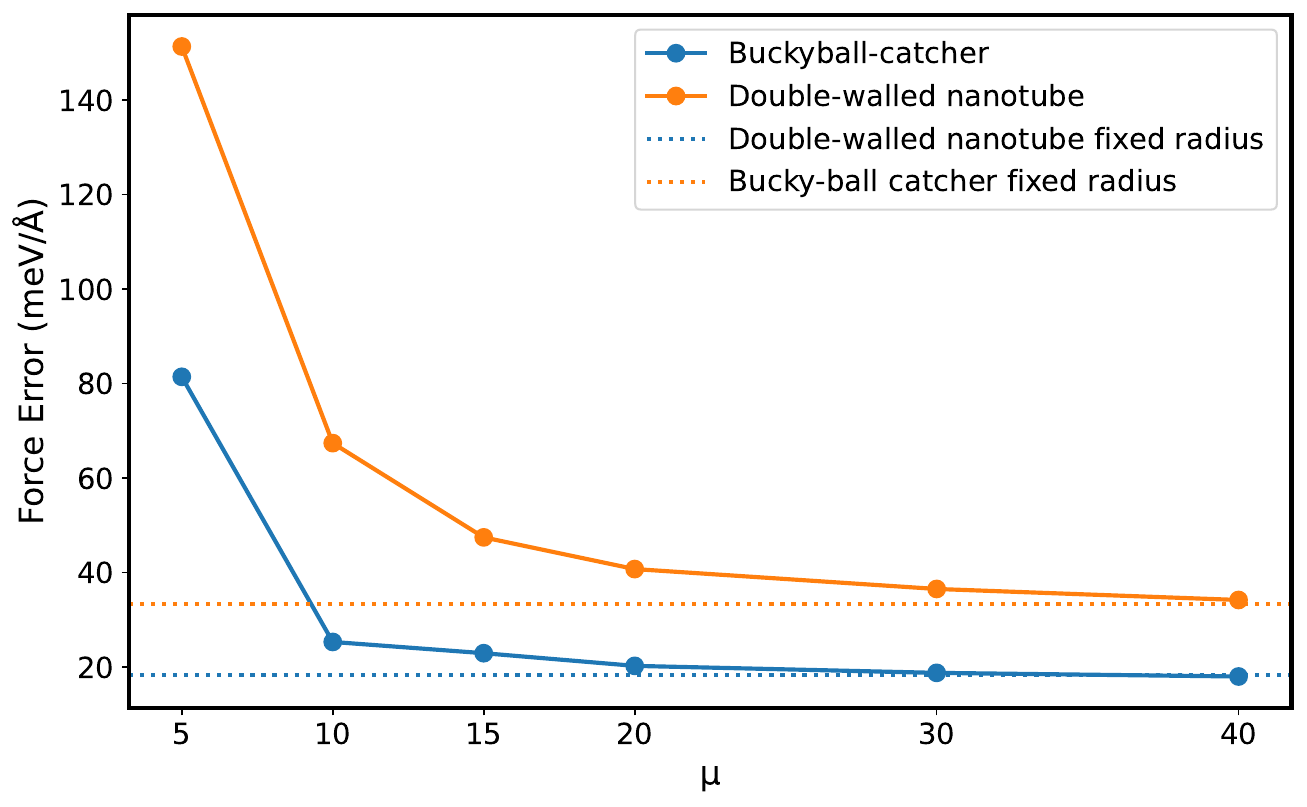}}
    \caption{
    Relationship between force MAE (meV/$\si{\angstrom}$) and target number of neighbors, $\mu$ on the double walled nanotube and buckyball catcher datasets from MD22. The models trained were TensorNet models with 456k parameters using the same hyperparameters as Section \ref{sec:molecules}. Each TensorNet with different $\mu$ is trained separately.
    }
    \label{fig:scaling_mu}
  \end{center}
  \vspace{-10pt}
\end{figure}

\section{Discussion}

\subsection{Dynamic Cutoff vs. Reducing Fixed Cutoff}
A dynamic cutoff poses several advantages over simply reducing the fixed cutoff radius, especially for foundational MLIPs trained on a range of atomic systems that are diverse in atomic density (atoms/$\si{\angstrom}^3$) and chemistries. Compared to a fixed cutoff set to the target number of neighbors for a specific system, a dynamic cutoff handles atomic density changes during a simulation more robustly. For example, a crack propagation simulation will have a void open up at a centralized location in the material \citep{mullins1982crackprop}. The direct neighborhood around the void will see a significant atomic density decrease, leading a reduced fixed cutoff model to drop neighbors and lose accuracy for the critical atoms near the tip. A dynamic cutoff model, on the other hand, will be able to extend its cutoff to still maintain enough neighbors to accurately calculate energy and forces. Other examples of density changes during a simulation can be found in the formation of gas bubbles on surfaces, evaporation of liquid systems into their gaseous states, and physical expansion of lithium battery anodes during charging \citep{she2016bubble, zhakhovskii1997evaporation, toki2024batteryexpansion}. In essence, if the density of the atomic system changes even slightly throughout the simulation, a fixed cutoff model will drop neighbors crucial for accurate energy and force calculations while a dynamic cutoff model adjusts its cutoff radius to stably include the target number of neighboring atoms. 

We illustrate this difference in a high temperature melting simulation of 864 atoms of copper at 4500K over 10 ps. We perform the simulation with a well-calibrated classical potential accounting for all pair-wise interactions under the NVT ensemble using a Langevin thermostat. We then plot the average number of neighbors at each timestep under a reduced fixed cutoff of 4.25$\si{\angstrom}$ (blue curve in Figure \ref{fig:melt_analysis_mini}). This reduced fixed cutoff radius is chosen such that it averages to 20 neighbors per atom in bulk copper. We also plot the average number of neighbors for a dynamic cutoff with $\mu=20$ (green curve in Figure \ref{fig:melt_analysis_mini}). The average number of neighbors for the reduced fixed cutoff decreases significantly throughout the simulation while the dynamic cutoff aligns more closely with the unreduced fixed cutoff of 7$\si{\angstrom}$.

\begin{figure}[h]
  \begin{center}
    \centerline{\includegraphics[clip, trim=45px 5px 50px 20px, width=\columnwidth]{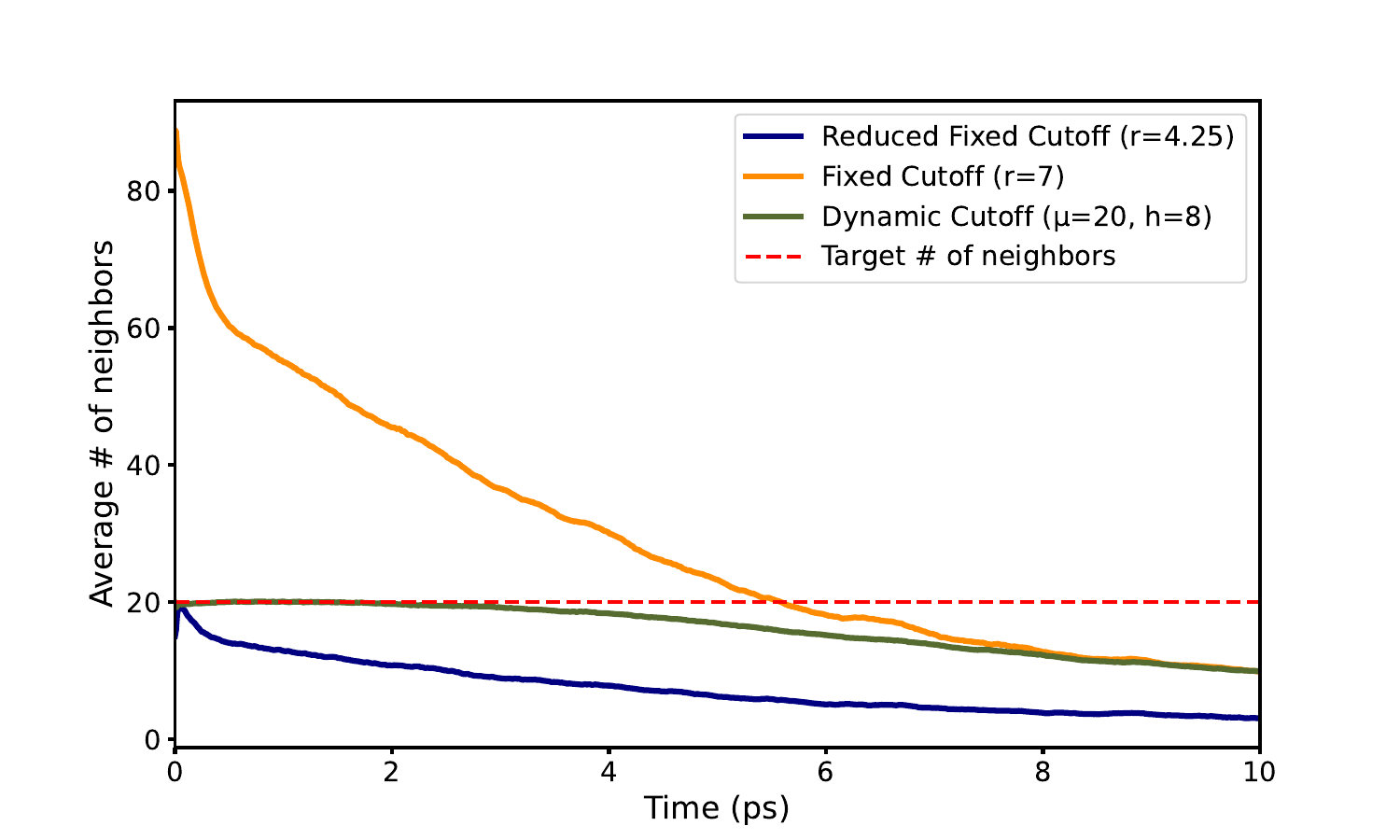}}
    \caption{
    The average number of neighbors per atom for an 864 atom copper melting simulation at 4500K using a classical potential. We show that simply reducing a fixed cutoff radius to average 20 neighbors/atom in the bulk state (blue curve) is insufficient for capturing local neighborhoods and maintaining the desired number of neighbors in molecular dynamics simulation where the atomic density (atoms/$\si{\angstrom}^3$) may fluctuate. Dynamic cutoffs (green) on the other hand, can expand their cutoff radii to include more neighbors as the atomic density decreases during the simulation. We include a fixed cutoff at 7$\si{\angstrom}$ (orange) to show the average number of neighbors in an unreduced fixed cutoff setting for this simulation. 
    }
    \label{fig:melt_analysis_mini}
  \end{center}
  \vspace{-10pt}
\end{figure}

Another advantage a dynamic cutoff model has over a reduced, fixed cutoff model involves the training of foundational MLIPs on a diverse array of atomic systems. A foundational MLIP with a fixed cutoff can result in an overwhelming amount of neighbors in atomically dense systems within the training data, such as alloys, and too few neighbors in atomically sparse systems. Dynamic cutoffs, on the other hand, reconcile this issue between dense and sparse chemical systems. We show this difference by training a TensorNet model with its fixed cutoff tuned to average 40 neighbors on the MatPES dataset (approximately 5.4141$\si{\angstrom}$). We initialize the model using the same architecture and training hyperparameters as the TensorNet models trained in Section \ref{sec:materials} and report the results in Table \ref{tab:matpes_rfc}. The dynamic cutoff TensorNet results were taken directly from the MatPES experiments in Section \ref{sec:materials}. As seen from the table, the dynamic cutoff model can converge to lower energy and force error compared to the reduced cutoff model even though the overall average number of neighbors across the dataset is the same -- highlighting how foundational potentials benefit strongly from using a dynamic cutoff over tuning a fixed cutoff that may not be optimal for a wide variety of atomic systems.

\begin{table}[t]
  \caption{The energy and force MAEs (meV/atom and meV/$\si{\angstrom}$) of a dynamic cutoff TensorNet ($\mu=40$) trained on MatPES and a TensorNet with a reduced fixed cutoff calibrated to average 40 neighbors/atom in the training set. The training and architecture hyperparameters for both versions of the model match the TensorNet models trained in Section \ref{sec:materials}.}
  \label{tab:matpes_rfc}
  \begin{center}
    \begin{small}
      \begin{sc}
      \setlength{\tabcolsep}{3pt}
        \begin{tabular}{ccc}
          \toprule
          Cutoff Type & Energy & Forces \\
          \midrule
          Dynamic Cutoff & 43 & 160 \\
          Reduced Fixed Cutoff (5.4141$\si{\angstrom}$) & 53 & 171 \\
          \bottomrule
        \end{tabular}
      \end{sc}
    \end{small}
  \end{center}
  \vskip -0.1in
\end{table}

\subsection{Neighbor Count}
The current dynamic cutoff formulation represents a smooth way to target an average number of neighbors per atom. However, in order to adhere to simulation stability and second-order differentiability constraints, dynamic cutoffs are not able to maintain the exact target $\mu$ for all possible atomic configurations. Specifically, the primary extreme failure mode for a dynamic cutoff lies when all neighboring atoms are equidistant to a center atom. In this case, setting a spherical cutoff radius to any value will be forced to either include all of the neighbors or none of them. Otherwise, under reasonable choices of $\alpha$ and $\sigma$ in Equations \ref{eq:ranking} and \ref{eq:weighting}, we find the dynamic cutoff to be highly capable of choosing the cutoff such that there are close to $\mu$ neighbors on average. In particular, the choice of $\alpha$ affects the strength in which $R$ distinguishes the ranking of each neighbor $u \in N_v$ from one another while the choice of $\sigma$ determines how ``sharp" the average number of neighbors will revolve $\mu$.

\subsection{Orthogonality}
The dynamic cutoff induces sparsity onto the underlying atom graph. Therefore, the memory consumption decrease and inference acceleration lies orthogonal to speedups resulting from faster equivariance kernels and lead to multiplicative gains when both techniques are used in tandem \citep{bharadwaj2025openequivariance, nvidia2024cuequivariance, tan2025nequipfast, lee2025flashtp}. Furthermore, model pruning techniques such as those presented by \citet{kong2025scalable} as well as the distributed inference platform, DistMLIP, presented by \citet{han2025distmlip} are also orthogonal and would lead to multiplicative speedup and memory consumption decreases. For example, a custom CUDA kernel leading to 4x memory reduction and 4x inference time reduction combined with an 8 GPU setup from DistMLIP as well as a dynamic cutoff, would lead to a total of around 64x larger simulatable systems that should, assuming no overhead, run around 64x faster per atom-timestep. The implementation of a dynamic cutoff does not replace any of the previous acceleration techniques, but rather complements them and multiplies them instead.

\section{Conclusion}
In this work, we challenge the commonly held assumption that the cutoff radius of a machine learning interatomic potential (MLIP) must remain a fixed, constant value in order for molecular dynamics simulation to be stable. We introduce a dynamic cutoff formulation, a setting in which the cutoff radius is variable. We prove that our formulation is second-order differentiable and therefore leads to stable long timescale simulation. Using a dynamic cutoff to induce sparsity onto the underlying atom graph, we achieve up to 2.26x less memory consumption and 2.04x less inference time depending on the atomic system and model, with little accuracy decrease across four models (MACE, Nequip, Orbv3, and TensorNet) -- allowing the simulation and further study of significantly larger atomic systems at faster speeds. 


\section*{Impact Statement}
This paper presents work whose goal is to advance the field of Machine
Learning. There are many potential societal consequences of our work, none
which we feel must be specifically highlighted here.


\bibliography{references}

@InProceedings{esen,
  title = 	 {Learning Smooth and Expressive Interatomic Potentials for Physical Property Prediction},
  author =       {Fu, Xiang and Wood, Brandon M and Barroso-Luque, Luis and Levine, Daniel S. and Gao, Meng and Dzamba, Misko and Zitnick, C. Lawrence},
  booktitle = 	 {Proceedings of the 42nd International Conference on Machine Learning},
  pages = 	 {17875--17893},
  year = 	 {2025},
  editor = 	 {Singh, Aarti and Fazel, Maryam and Hsu, Daniel and Lacoste-Julien, Simon and Berkenkamp, Felix and Maharaj, Tegan and Wagstaff, Kiri and Zhu, Jerry},
  volume = 	 {267},
  series = 	 {Proceedings of Machine Learning Research},
  month = 	 {13--19 Jul},
  publisher =    {PMLR},
  pdf = 	 {https://raw.githubusercontent.com/mlresearch/v267/main/assets/fu25h/fu25h.pdf},
  url = 	 {https://proceedings.mlr.press/v267/fu25h.html},
  abstract = 	 {Machine learning interatomic potentials (MLIPs) have become increasingly effective at approximating quantum mechanical calculations at a fraction of the computational cost. However, lower errors on held out test sets do not always translate to improved results on downstream physical property prediction tasks. In this paper, we propose testing MLIPs on their practical ability to conserve energy during molecular dynamic simulations. If passed, improved correlations are found between test errors and their performance on physical property prediction tasks. We identify choices which may lead to models failing this test, and use these observations to improve upon highly-expressive models. The resulting model, eSEN, provides state-of-the-art results on a range of physical property prediction tasks, including materials stability prediction, thermal conductivity prediction, and phonon calculations.}
}

@article{deringer2019machine,
  title={Machine learning interatomic potentials as emerging tools for materials science},
  author={Deringer, Volker L and Caro, Miguel A and Cs{\'a}nyi, G{\'a}bor},
  journal={Advanced Materials},
  volume={31},
  number={46},
  pages={1902765},
  year={2019},
  publisher={Wiley Online Library}
}

@article{jacobs2025practical,
  title={A practical guide to machine learning interatomic potentials--Status and future},
  author={Jacobs, Ryan and Morgan, Dane and Attarian, Siamak and Meng, Jun and Shen, Chen and Wu, Zhenghao and Xie, Clare Yijia and Yang, Julia H and Artrith, Nongnuch and Blaiszik, Ben and others},
  journal={Current Opinion in Solid State and Materials Science},
  volume={35},
  pages={101214},
  year={2025},
  publisher={Elsevier}
}

@article{larsen2017ase,
  title={The atomic simulation environment—a Python library for working with atoms},
  author={Larsen, Ask Hjorth and Mortensen, Jens J{\o}rgen and Blomqvist, Jakob and Castelli, Ivano E and Christensen, Rune and Du{\l}ak, Marcin and Friis, Jesper and Groves, Michael N and Hammer, Bj{\o}rk and Hargus, Cory and others},
  journal={Journal of Physics: Condensed Matter},
  volume={29},
  number={27},
  pages={273002},
  year={2017},
  publisher={IOP Publishing}
}

@article{hollingsworth2018molecular,
  title={Molecular dynamics simulation for all},
  author={Hollingsworth, Scott A and Dror, Ron O},
  journal={Neuron},
  volume={99},
  number={6},
  pages={1129--1143},
  year={2018},
  publisher={Elsevier}
}

@article{harvey2012high,
  title={High-throughput molecular dynamics: the powerful new tool for drug discovery},
  author={Harvey, Matthew J and De Fabritiis, Gianni},
  journal={Drug discovery today},
  volume={17},
  number={19-20},
  pages={1059--1062},
  year={2012},
  publisher={Elsevier}
}

@inproceedings{lee2025flashtp,
  title={FlashTP: Fused, Sparsity-Aware Tensor Product for Machine Learning Interatomic Potentials},
  author={Lee, Seung Yul and Kim, Hojoon and Park, Yutack and Jeong, Dawoon and Han, Seungwu and Park, Yeonhong and Lee, Jae W},
  booktitle={Forty-second International Conference on Machine Learning},
  year={2025}
}

@article{tan2025nequipfast,
  title={High-performance training and inference for deep equivariant interatomic potentials},
  author={Tan, Chuin Wei and Descoteaux, Marc L and Kotak, Mit and Nascimento, Gabriel de Miranda and Kavanagh, Se{\'a}n R and Zichi, Laura and Wang, Menghang and Saluja, Aadit and Hu, Yizhong R and Smidt, Tess and others},
  journal={arXiv preprint arXiv:2504.16068},
  year={2025}
}

@article{musaelian2023learning,
  title={Learning local equivariant representations for large-scale atomistic dynamics},
  author={Musaelian, Albert and Batzner, Simon and Johansson, Anders and Sun, Lixin and Owen, Cameron J and Kornbluth, Mordechai and Kozinsky, Boris},
  journal={Nature Communications},
  volume={14},
  number={1},
  pages={579},
  year={2023},
  publisher={Nature Publishing Group UK London}
}

@online{nvidia2024cuequivariance,
  title        = {Accelerate Drug and Material Discovery with New Math Library NVIDIA cuEquivariance},
  author       = {Mario Geiger and Emine Kucukbenli and Becca Zandstein and Kyle Tretina},
  year         = {2024},
  month        = nov,
  day          = {18},
  organization = {NVIDIA},
  url          = {https://developer.nvidia.com/blog/accelerate-drug-and-material-discovery-with-new-math-library-nvidia-cuequivariance/},
  note         = {Technical blog post}
}

@article{kong2025scalable,
  title={Scalable Foundation Interatomic Potentials via Message-Passing Pruning and Graph Partitioning},
  author={Kong, Lingyu and Shim, Jaeheon and Hu, Guoxiang and Fung, Victor},
  journal={arXiv preprint arXiv:2509.21694},
  year={2025}
}

@article{anstine2023longerange,
  title={Machine learning interatomic potentials and long-range physics},
  author={Anstine, Dylan M and Isayev, Olexandr},
  journal={The Journal of Physical Chemistry A},
  volume={127},
  number={11},
  pages={2417--2431},
  year={2023},
  publisher={ACS Publications}
}

@article{leimeroth2025machine,
  title={Machine-learning interatomic potentials from a users perspective: A comparison of accuracy, speed and data efficiency},
  author={Leimeroth, Niklas and Erhard, Linus C and Albe, Karsten and Rohrer, Jochen},
  journal={arXiv preprint arXiv:2505.02503},
  year={2025}
}

@article{rhodes2025orbv3,
  title={Orb-v3: atomistic simulation at scale},
  author={Rhodes, Benjamin and Vandenhaute, Sander and {\v{S}}imkus, Vaidotas and Gin, James and Godwin, Jonathan and Duignan, Tim and Neumann, Mark},
  journal={arXiv preprint arXiv:2504.06231},
  year={2025}
}

@inproceedings{kozinsky2023allegroscaling,
  title={Scaling the leading accuracy of deep equivariant models to biomolecular simulations of realistic size},
  author={Kozinsky, Boris and Musaelian, Albert and Johansson, Anders and Batzner, Simon},
  booktitle={Proceedings of the International Conference for High Performance Computing, Networking, Storage and Analysis},
  pages={1--12},
  year={2023}
}

@article{fu2025learning,
  title={Learning smooth and expressive interatomic potentials for physical property prediction},
  author={Fu, Xiang and Wood, Brandon M and Barroso-Luque, Luis and Levine, Daniel S and Gao, Meng and Dzamba, Misko and Zitnick, C Lawrence},
  journal={arXiv preprint arXiv:2502.12147},
  year={2025}
}

@article{jones1924determination,
  title={On the determination of molecular fields.—I. From the variation of the viscosity of a gas with temperature},
  author={Jones, John Edward},
  journal={Proceedings of the Royal Society of London. Series A, containing papers of a mathematical and physical character},
  volume={106},
  number={738},
  pages={441--462},
  year={1924},
  publisher={The Royal Society London}
}

@article{fu2022forces,
  title={Forces are not enough: Benchmark and critical evaluation for machine learning force fields with molecular simulations},
  author={Fu, Xiang and Wu, Zhenghao and Wang, Wujie and Xie, Tian and Keten, Sinan and Gomez-Bombarelli, Rafael and Jaakkola, Tommi},
  journal={arXiv preprint arXiv:2210.07237},
  year={2022}
}

@article{deng2023chgnet,
  title={CHGNet as a pretrained universal neural network potential for charge-informed atomistic modelling},
  author={Deng, Bowen and Zhong, Peichen and Jun, KyuJung and Riebesell, Janosh and Han, Kevin and Bartel, Christopher J and Ceder, Gerbrand},
  journal={Nature Machine Intelligence},
  volume={5},
  number={9},
  pages={1031--1041},
  year={2023},
  publisher={Nature Publishing Group UK London}
}

@article{batatia2022mace,
  title={MACE: Higher order equivariant message passing neural networks for fast and accurate force fields},
  author={Batatia, Ilyes and Kovacs, David P and Simm, Gregor and Ortner, Christoph and Cs{\'a}nyi, G{\'a}bor},
  journal={Advances in neural information processing systems},
  volume={35},
  pages={11423--11436},
  year={2022}
}

@article{batzner2023nequip,
  title={E (3)-equivariant graph neural networks for data-efficient and accurate interatomic potentials},
  author={Batzner, Simon and Musaelian, Albert and Sun, Lixin and Geiger, Mario and Mailoa, Jonathan P and Kornbluth, Mordechai and Molinari, Nicola and Smidt, Tess E and Kozinsky, Boris},
  journal={Nature communications},
  volume={13},
  number={1},
  pages={2453},
  year={2022},
  publisher={Nature Publishing Group UK London}
}

@article{wood2025uma,
  title={UMA: A Family of Universal Models for Atoms},
  author={Wood, Brandon M and Dzamba, Misko and Fu, Xiang and Gao, Meng and Shuaibi, Muhammed and Barroso-Luque, Luis and Abdelmaqsoud, Kareem and Gharakhanyan, Vahe and Kitchin, John R and Levine, Daniel S and others},
  journal={arXiv preprint arXiv:2506.23971},
  year={2025}
}

@InProceedings{fu2025esen,
  title     = {Learning Smooth and Expressive Interatomic Potentials for Physical Property Prediction},
  author    = {Fu, Xiang and Wood, Brandon M and Barroso-Luque, Luis and Levine, Daniel S. and Gao, Meng and Dzamba, Misko and Zitnick, C. Lawrence},
  booktitle = {Proceedings of the 42nd International Conference on Machine Learning},
  pages     = {17875--17893},
  year      = {2025},
  editor    = {Singh, Aarti and Fazel, Maryam and Hsu, Daniel and Lacoste-Julien, Simon and Berkenkamp, Felix and Maharaj, Tegan and Wagstaff, Kiri and Zhu, Jerry},
  volume    = {267},
  series    = {Proceedings of Machine Learning Research},
  month     = {13--19 Jul},
  publisher = {PMLR},
  pdf       = {https://raw.githubusercontent.com/mlresearch/v267/main/assets/fu25h/fu25h.pdf},
  url       = {https://proceedings.mlr.press/v267/fu25h.html},
  abstract  = {Machine learning interatomic potentials (MLIPs) have become increasingly effective at approximating quantum mechanical calculations at a fraction of the computational cost. However, lower errors on held out test sets do not always translate to improved results on downstream physical property prediction tasks. In this paper, we propose testing MLIPs on their practical ability to conserve energy during molecular dynamic simulations, identify modeling choices that can fail this test, and improve upon expressive models. The resulting model, eSEN, provides state-of-the-art results on a range of physical property prediction tasks.}
}

@article{park2024sevennet,
  title={Scalable parallel algorithm for graph neural network interatomic potentials in molecular dynamics simulations},
  author={Park, Yutack and Kim, Jaesun and Hwang, Seungwoo and Han, Seungwu},
  journal={Journal of chemical theory and computation},
  volume={20},
  number={11},
  pages={4857--4868},
  year={2024},
  publisher={ACS Publications}
}

@article{simeon2023tensornet,
  title={Tensornet: Cartesian tensor representations for efficient learning of molecular potentials},
  author={Simeon, Guillem and De Fabritiis, Gianni},
  journal={Advances in Neural Information Processing Systems},
  volume={36},
  pages={37334--37353},
  year={2023}
}

@article{zhong2025modeling,
  title={Modeling phase transformations in Mn-rich disordered rocksalt cathodes with machine-learning interatomic potentials},
  author={Zhong, Peichen and Deng, Bowen and Anand, Shashwat and Mishra, Tara and Ceder, Gerbrand},
  journal={Physical Review Materials},
  volume={9},
  number={10},
  pages={105404},
  year={2025},
  publisher={APS}
}

@article{greathouse2006interaction,
  title={The interaction of water with MOF-5 simulated by molecular dynamics},
  author={Greathouse, Jeffery A and Allendorf, Mark D},
  journal={Journal of the American Chemical Society},
  volume={128},
  number={33},
  pages={10678--10679},
  year={2006},
  publisher={ACS Publications}
}

@article{han2025distmlip,
  title={DistMLIP: A Distributed Inference Platform for Machine Learning Interatomic Potentials},
  author={Han, Kevin and Deng, Bowen and Farimani, Amir Barati and Ceder, Gerbrand},
  journal={arXiv preprint arXiv:2506.02023},
  year={2025}
}

@article{hairer2003geometric,
  title={Geometric numerical integration illustrated by the St{\"o}rmer--Verlet method},
  author={Hairer, Ernst and Lubich, Christian and Wanner, Gerhard},
  journal={Acta numerica},
  volume={12},
  pages={399--450},
  year={2003},
  publisher={Cambridge University Press}
}

@article{gasteiger2020directional,
  title={Directional message passing for molecular graphs},
  author={Gasteiger, Johannes and Gro{\ss}, Janek and G{\"u}nnemann, Stephan},
  journal={arXiv preprint arXiv:2003.03123},
  year={2020}
}

@inproceedings{bharadwaj2025openequivariance,
  title={An efficient sparse kernel generator for o (3)-equivariant deep networks},
  author={Bharadwaj, Vivek and Glover, Austin and Bulu{\c{c}}, Ayd{\i}n and Demmel, James},
  booktitle={2025 Proceedings of the Conference on Applied and Computational Discrete Algorithms (ACDA)},
  pages={32--46},
  year={2025},
  organization={SIAM}
}

@article{kaplan2025matpes,
  title={A foundational potential energy surface dataset for materials},
  author={Kaplan, Aaron D and Liu, Runze and Qi, Ji and Ko, Tsz Wai and Deng, Bowen and Riebesell, Janosh and Ceder, Gerbrand and Persson, Kristin A and Ong, Shyue Ping},
  journal={arXiv preprint arXiv:2503.04070},
  year={2025}
}

@article{chmiela2023md22,
  title={Accurate global machine learning force fields for molecules with hundreds of atoms},
  author={Chmiela, Stefan and Vassilev-Galindo, Valentin and Unke, Oliver T and Kabylda, Adil and Sauceda, Huziel E and Tkatchenko, Alexandre and M{\"u}ller, Klaus-Robert},
  journal={Science Advances},
  volume={9},
  number={2},
  pages={eadf0873},
  year={2023},
  publisher={American Association for the Advancement of Science}
}

@article{bitzek2006firerelaxation,
  title={Structural relaxation made simple},
  author={Bitzek, Erik and Koskinen, Pekka and G{\"a}hler, Franz and Moseler, Michael and Gumbsch, Peter},
  journal={Physical review letters},
  volume={97},
  number={17},
  pages={170201},
  year={2006},
  publisher={APS}
}

@article{bank1971protein,
  title={Protein data bank},
  author={Bank, Protein Data},
  journal={Nature New Biol},
  volume={233},
  number={223},
  pages={10--1038},
  year={1971}
}

@article{mullins1982crackprop,
  title={Molecular dynamics simulation of propagating cracks},
  author={Mullins, Mayes},
  journal={Scripta Metallurgica},
  volume={16},
  year={1982}
}

@article{she2016bubble,
  title={Bubble formation on solid surface with a cavity based on molecular dynamics simulation},
  author={She, Xiaohui and Shedd, Timothy A and Lindeman, Brett and Yin, Yonggao and Zhang, Xiaosong},
  journal={International Journal of Heat and Mass Transfer},
  volume={95},
  pages={278--287},
  year={2016},
  publisher={Elsevier}
}

@article{zhakhovskii1997evaporation,
  title={Molecular-dynamics simulation of evaporation of a liquid},
  author={Zhakhovskii, VV and Anisimov, SI},
  journal={Journal of Experimental and Theoretical Physics},
  volume={84},
  number={4},
  pages={734--745},
  year={1997},
  publisher={Springer}
}

@article{toki2024batteryexpansion,
  title={Recent progress and challenges in silicon-based anode materials for lithium-ion batteries},
  author={Toki, Gazi Farhan Ishraque and Hossain, M Khalid and Rehman, Waheed Ur and Manj, Rana Zafar Abbas and Wang, Li and Yang, Jianping},
  journal={Industrial Chemistry \& Materials},
  volume={2},
  number={2},
  pages={226--269},
  year={2024},
  publisher={Royal Society of Chemistry}
}

@article{Argaman_2000,
   title={Density functional theory: An introduction},
   volume={68},
   ISSN={1943-2909},
   url={http://dx.doi.org/10.1119/1.19375},
   DOI={10.1119/1.19375},
   number={1},
   journal={American Journal of Physics},
   publisher={American Association of Physics Teachers (AAPT)},
   author={Argaman, Nathan and Makov, Guy},
   year={2000},
   month=jan, pages={69–79} }

@article{bartlett2007coupled,
  title={Coupled-cluster theory in quantum chemistry},
  author={Bartlett, Rodney J and Musia{\l}, Monika},
  journal={Reviews of Modern Physics},
  volume={79},
  number={1},
  pages={291--352},
  year={2007},
  publisher={APS}
}

@article{clementi1963atomic,
  title={Atomic screening constants from SCF functions},
  author={Clementi, Enrico and Raimondi, D-L\_},
  journal={The Journal of Chemical Physics},
  volume={38},
  number={11},
  pages={2686--2689},
  year={1963},
  publisher={American Institute of Physics}
}

@article{element_cutoffs,
author = {Zuo, Yunxing and Chen, Chi and Li, Xiangguo and Deng, Zhi and Chen, Yiming and Behler, J{\"o}rg and Csányi, Gábor and Shapeev, Alexander V. and Thompson, Aidan P. and Wood, Mitchell A. and Ong, Shyue Ping},
title = {Performance and Cost Assessment of Machine Learning Interatomic Potentials},
journal = {The Journal of Physical Chemistry A},
volume = {124},
number = {4},
pages = {731-745},
year = {2020},
doi = {10.1021/acs.jpca.9b08723},
    note ={PMID: 31916773},
URL = {https://doi.org/10.1021/acs.jpca.9b08723},
eprint = { https://doi.org/10.1021/acs.jpca.9b08723}
}

@article{matlantis,
  author    = {Takamoto, So and
               Shinagawa, Chikashi and
               Motoki, Daisuke and
               Nakago, Kosuke and
               Li, Wenwen and
               Kurata, Iori and
               Watanabe, Taku and
               Yayama, Yoshihiro and
               Iriguchi, Hiroki and
               Asano, Yusuke and
               Onodera, Tasuku and
               Ishii, Takafumi and
               Kudo, Takao and
               Ono, Hideki and
               Sawada, Ryohto and
               Ishitani, Ryuichiro and
               Ong, Marc and
               Yamaguchi, Taiki and
               Kataoka, Toshiki and
               Hayashi, Akihide and
               Charoenphakdee, Nontawat and
               Ibuka, Takeshi},
  title     = {Towards universal neural network potential for material discovery applicable to arbitrary combination of 45 elements},
  journal   = {Nature Communications},
  volume    = {13},
  number    = {1},
  pages     = {2991},
  year      = {2022},
  doi       = {10.1038/s41467-022-30687-9},
  url       = {https://doi.org/10.1038/s41467-022-30687-9},
  issn      = {2041-1723},
  abstract  = {Computational material discovery is under intense study owing to its ability to explore the vast space of chemical systems. Neural network potentials (NNPs) have been shown to be particularly effective in conducting atomistic simulations for such purposes. However, existing NNPs are generally designed for narrow target materials, making them unsuitable for broader applications in material discovery. Here we report a development of universal NNP called PreFerred Potential (PFP), which is able to handle any combination of 45 elements. Particular emphasis is placed on the datasets, which include a diverse set of virtual structures used to attain the universality. We demonstrated the applicability of PFP in selected domains: lithium diffusion in LiFeSO4F, molecular adsorption in metal-organic frameworks, an order--disorder transition of Cu--Au alloys, and material discovery for a Fischer--Tropsch catalyst. They showcase the power of PFP, and this technology provides a highly useful tool for material discovery.}
}
\bibliographystyle{icml2026}

\newpage
\appendix
\onecolumn
\section{Proof of Differentiability}
\label{sec:proof}
MLIPs capable of stable simulation typically perform message passing in the form of Equation \ref{eq:message_passing}, where an envelope function is applied to each message from all of the neighbors $u \in N_v$ of a center node $v$. The neighbors are computed to be all atoms within a fixed radius cutoff of $v$.

Message passing neural networks, where messages are constructed from the following form, are at least second-order differentiable.

\begin{equation}
    \label{eq:message_passing}
    M_{uv} = f\left(\frac{r_{uv}}{c_v}\right)m_{uv}
\end{equation}

where $r_{uv}$ is the distance between $u$ and $v$, $c_v$ is the calculated cutoff radius for $v$, $m_{uv}$ is the rest of the message construction, and $f$ is a smooth function where $f(1) = 0$, $f'(1) = 0$, and $f''(1) = 0$. 

When we take the gradient and Hessian of $M_{uv}$ with respect to the position of $v$, $\mathbf{x}_v$, we get the following

\begin{equation}
    \label{eq:message_grad}
    \nabla_{\mathbf{x}_v} M_{uv} = f'\left(\frac{r_{uv}}{c_v}\right)\nabla_{\mathbf{x}_v} \left(\frac{r_{uv}}{c_v}\right)m_{uv} + f\left(\frac{r_{uv}}{c_v}\right)\nabla_{\mathbf{x}_v} m_{uv}
\end{equation}
\begin{equation}
\label{eq:message_hes}
\begin{split}
    H_{\mathbf{x}_v} M_{uv} &= f''\left(\frac{r_{uv}}{c_v}\right)\nabla_{\mathbf{x}_v} \left(\frac{r_{uv}}{c_v}\right)\left(\nabla_{\mathbf{x}_v} \left(\frac{r_{uv}}{c_v}\right)\right)^Tm_{uv} + f'\left(\frac{r_{uv}}{c_v}\right)H_{\mathbf{x}_v} \left(\frac{r_{uv}}{c_v}\right)m_{uv} \\
    &\ \ \ \ \ + f'\left(\frac{r_{uv}}{c_v}\right)\nabla_{\mathbf{x}_v} \left(\frac{r_{uv}}{c_v}\right)(\nabla_{\mathbf{x}_v} m_{uv})^T + f'\left(\frac{r_{uv}}{c_v}\right)\nabla_{\mathbf{x}_v} m_{uv}\left(\nabla_{\mathbf{x}_v} \left(\frac{r_{uv}}{c_v}\right)\right)^T + f\left(\frac{r_{uv}}{c_v}\right)H_{\mathbf{x}_v} m_{uv}
\end{split}
\end{equation}
where
\[
    \nabla_{\mathbf{x}_v} \left(\frac{r_{uv}}{c_v}\right) = \frac{\nabla_{\mathbf{x}_v} r_{uv}}{c_v} - \frac{r_{uv}\nabla_{\mathbf{x}_v} c_v}{c_v^2}
\]
\[
    H_{\mathbf{x}_v} \left(\frac{r_{uv}}{c_v}\right) = \frac{H_{\mathbf{x}_v} r_{uv}}{c_v} - \frac{\nabla_{\mathbf{x}_v} r_{uv}(\nabla_{\mathbf{x}_v} c_v)^T}{c_v^2} - \frac{\nabla_{\mathbf{x}_v} c_v(\nabla_{\mathbf{x}_v} r_{uv})^T + r_{uv}H_{\mathbf{x}_v} c_v}{c_v^2} + \frac{2r_{uv}\nabla_{\mathbf{x}_v} c_v(\nabla_{\mathbf{x}_v} c_v)^T}{c_v^3}
\]

The calculation for the gradient and second order gradients are similar for $\nabla_{\mathbf{x_u}} M_{uv}$ and $H_{\mathbf{x}_u}(M_{uv})$.

It is known that the composition of $k$-order differentiable equations is $k$-order differentiable. Therefore, as a result of Equations \ref{eq:message_grad} and \ref{eq:message_hes}, and assuming $m_{uv}$ is second order differentiable, we simply need to prove that the calculation for $c_v$ is second-order differentiable with respect to the positions of $u$ and $v$.

\subsection{Cutoff Calculation}
For convenience, we reiterate our dynamic cutoff function here:

Let $v$ be a node and $N_v$ be the set of incoming neighbors to $v$ within a hard cutoff radius of $h$. Let $u$ be another node such that $u \in N_v$, and let $r_{uv}$ describe the distance of the edge from atom $u$ to atom $v$.

For all $u \in N_v$, we define a rank $R_u$ where
\begin{equation}
    \label{eq:ranking_appendix}
    R_u = \sum_{t \in N_v \setminus \{u\}} \left[ S (\alpha * (r_{uv} - r_{tv}))p\left(\frac{r_{tv}}{h}\right) \right]
\end{equation}
where $S$ is the sigmoid function, $\alpha \in \mathbb{R}^+$, and $p : \mathbb{R} \to \mathbb{R}$ is the polynomial envelope function introduced in \citet{gasteiger2020directional}, defined as

\begin{equation*}
    \begin{aligned}
        p(x) =\;& 1
        - \frac{(n + 1)(n + 2)}{2} x^n \\
        &+ n(n + 2) x^{n + 1}
        - \frac{n(n + 1)}{2} x^{n + 2}
    \end{aligned}
\end{equation*}

for some $n \in \mathbb{N}^+$ where $n \geq 3$. Note that, by design, $p(1) = 0$, $p'(1) = 0$, and $p''(1) = 0$. We perform the rankings in Equation \ref{eq:ranking_appendix} for all nodes $u$ in the atom graph $G$. 

Next, we use the probability density function of the normal distribution to define a weighting function over neighbor rankings $R_u$. Let's define $\omega : \mathbb{R} \to \mathbb{R}$ as
\begin{equation}
    \omega(x) = \frac{1}{\sigma\sqrt{2\pi}}e^{-\frac{1}{2}\left(\frac{x - \mu}{\sigma}\right)^2}
\end{equation}
where $\mu, \sigma \in \mathbb{R}^+$. $\omega$ is interpreted as a symmetric weighting function for each $u \in N_v$ based on $u$'s ranking. 

Finally, we calculate the dynamic cutoff, $c_v$ for node $v$ using a weighted average as follows:
\begin{equation}
    c_v = f(N_v) = \frac{\left[\sum\limits_{u \in N_v} \omega(R_u)p\left(\frac{r_{uv}}{h}\right)r_{uv}\right] + h\epsilon}{\left[\sum\limits_{u \in N_v} \omega(R_u)p\left(\frac{r_{uv}}{h}\right)\right] + \epsilon}
\end{equation}

where $\epsilon$ is a tiny value such as 1e-4 introduced to preserve numerical stability.

\subsection{Continuity and Differentiability of Dynamic Cutoff Function}
\begin{theorem}
    The cutoff calculation function always leads to second-order differentiable PES regardless of the movement of the atoms, including the addition and removal of atoms from the neighbor list. 
\end{theorem}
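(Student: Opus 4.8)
The plan is to reduce the statement to a single claim --- that the dynamic cutoff $c_v$ is a $C^2$ function of the atomic coordinates, and stays $C^2$ as atoms cross the hard cutoff $h$ --- and then read off the theorem from the message-passing computation already displayed above. Granting $c_v \in C^2$, each per-edge message $M_{uv} = f(r_{uv}/c_v)\,m_{uv}$ of Equation~\ref{eq:message_passing} is a composition of $C^2$ maps: $f$, extended by $0$ for arguments $\ge 1$, is $C^2$ on all of $\mathbb{R}$ precisely because $f(1)=f'(1)=f''(1)=0$; the distance $r_{uv}=\lVert\mathbf{x}_u-\mathbf{x}_v\rVert$ is $C^\infty$ on the physically relevant region $\mathbf{x}_u\neq\mathbf{x}_v$; the ratio $r_{uv}/c_v$ is $C^2$ because $c_v$ is everywhere strictly positive (its numerator is $\ge h\epsilon$ and its denominator is a finite positive number); and $m_{uv}$ is $C^2$ by hypothesis on the MLIP. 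The energy is a finite sum of such messages fed through the remaining $C^2$ network operations, so the PES is $C^2$, and Equations~\ref{eq:message_grad}--\ref{eq:message_hes} confirm explicitly that the resulting gradient and Hessian contain no singular terms.

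So the whole proof comes down to showing $c_v \in C^2$ through neighbor-list changes. The key device I would use is to rewrite each of the three sums defining $c_v$ --- the ranks $R_u$, the numerator, and the denominator --- as a sum over \emph{all} atoms and periodic images of the system rather than over the current neighbor set $N_v$. This is legitimate because every channel through which a prospective neighbor's geometry enters the formula is multiplied by that neighbor's own envelope factor $p(r_{\cdot v}/h)$: in the numerator and denominator the term for $u$ carries $p(r_{uv}/h)$, and --- the step one must not overlook --- the contribution of $u$ inside $R_t$ for \emph{any other} atom $t$ is $S(\alpha(r_{tv}-r_{uv}))\,p(r_{uv}/h)$, which again carries $p(r_{uv}/h)$. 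Since $p(x)\equiv 0$ for $x\ge 1$, an atom with $r_{uv}\ge h$ contributes exactly $0$ wherever it appears, so the ``all-atoms'' expression equals the ``over $N_v$'' expression pointwise; in particular $c_v$ varies continuously and differentiably straight through the instant an atom enters or leaves $N_v$.

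With $c_v$ exhibited locally as a finite sum of terms over a fixed index set, the $C^2$ claim becomes routine closure bookkeeping, which I would spell out only briefly: $p$ (extended by $0$) is $C^2$ on $\mathbb{R}$ because $p(1)=p'(1)=p''(1)=0$; the sigmoid $S$ and the Gaussian weight $\omega$ are $C^\infty$; each $r_{wv}$ is $C^\infty$ away from atomic coincidences; the denominator is $\ge \epsilon>0$ since $\omega>0$ and $p\ge 0$ on $[0,1]$; and finite sums, products, compositions, and quotients with nonvanishing denominator of $C^2$ functions are $C^2$. Hence every $R_u$ is $C^2$, $\omega(R_u)$ is $C^2$, and therefore so is $c_v$, with respect to the position of $v$ and of each atom within $h$ of it. A final one-line remark then disposes of the subsequent pruning step --- dropping edges with $r_{uv}\ge c_v$ and reweighting the rest by $f(r_{uv}/c_v)$ --- since it is governed by the identical envelope mechanism, now with the position-dependent $c_v$ playing the role of $h$; this is exactly the situation already analyzed in Equations~\ref{eq:message_grad}--\ref{eq:message_hes}.

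The hard part is really only the middle paragraph: one has to check that \emph{every} way a neighbor's distance can influence $c_v$ is shielded by an envelope factor that vanishes together with its first two derivatives at the hard cutoff. The direct numerator and denominator terms are obvious; the indirect dependence of $R_t$ on $r_{uv}$ for $t\neq u$ is the easy thing to miss. Once that verification is in place, the theorem collapses onto elementary smooth-calculus closure properties together with the message-passing differentiation already carried out.
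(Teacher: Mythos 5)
Your proposal is correct, and it isolates exactly the point on which the paper's proof also turns: every channel through which a neighbor's distance enters $c_v$ --- the direct $p(r_{uv}/h)$ factors in the numerator and denominator, \emph{and} the indirect appearance of $p(r_{uv}/h)$ inside the rank $R_t$ of every other neighbor $t$ --- is shielded by an envelope that vanishes together with its first two derivatives at the hard cutoff. Where you differ from the paper is in how that observation is converted into a $C^2$ statement. The paper works at the boundary directly: it writes out $\nabla_{\mathbf{x}_t}A$, $\nabla_{\mathbf{x}_t}B$, $H_{\mathbf{x}_t}A$, $H_{\mathbf{x}_t}B$, and $\nabla_{\mathbf{x}_t}R_u$, $H_{\mathbf{x}_t}R_u$ explicitly, substitutes $p(1)=p'(1)=p''(1)=0$, and checks term by term that the gradient and Hessian of $c_v$ with respect to the departing atom's position go to zero as $r_{tv}\to h$, so that the value, gradient, and Hessian match what they become once the atom is dropped from $N_v$. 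You instead extend $p$ by zero to a globally $C^2$ function and rewrite all three sums over a locally fixed index set of all atoms (and images), after which $c_v$ is manifestly a finite composition of $C^2$ maps with a denominator bounded below by $\epsilon$, and no boundary-matching computation is needed at all. Your route is cleaner and slightly more rigorous on one technical point: the paper's argument implicitly relies on the fact that one-sided derivative limits agreeing across the boundary (plus continuity) yields differentiability there, whereas your formulation makes the function $C^2$ by construction and so never has to patch derivatives across the set $\{r_{tv}=h\}$. The paper's version, in exchange, exhibits the explicit gradient and Hessian formulas, which double as a correctness check on the implementation's backward pass. Both treatments handle the downstream pruning/reweighting step the same way, by deferring to the envelope analysis of Equations \ref{eq:message_grad}--\ref{eq:message_hes} with $c_v$ in the role of the cutoff.
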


\begin{proof}
    First, note that all operations in the cutoff calculation are smooth. Thus, for all $v \in V$, if every neighbor $u \in N$ is situated such that $r_{uv} < h$, then the cutoff calculation is smooth with respect to the positions of $u$ and $v$. Therefore, we only need to consider the case where there exists some neighbor $u$ such that $r_{uv} = h$.

Let $v \in V$. Assume that the number of atoms $s$ such that $r_{sv} < h$ is nonzero (otherwise, the would be no neighbor list for $v$). Assume that there exists some neighbor $t \in N$ such that $r_{tv} = h$. Then since $p(1) = 0$, $\omega(R_t)p\left(\frac{r_{tv}}{h}\right) = 0$. Therefore,
\begin{equation}
    c_v = \frac{\sum\limits_{u \in N} \omega(R_u)p\left(\frac{r_{uv}}{h}\right)r_{uv}}{\sum\limits_{u \in N} \omega(R_u)p\left(\frac{r_{uv}}{h}\right)} = \frac{\sum\limits_{u \in N \setminus \{t\}} \omega(R_u)p\left(\frac{r_{uv}}{h}\right)r_{uv}}{\sum\limits_{u \in N \setminus \{t\}} \omega(R_u)p\left(\frac{r_{uv}}{h}\right)}
\end{equation}
Therefore, if $t$ moves out of the $h$-radius sphere, its exclusion from $N$ will not cause a discontinuity in the value of $c$.

To define $\nabla_{\mathbf{x}_t} c_v$, we first define
\[
    A = \sum_{u \in N} \omega(R_u)p\left(\frac{r_{uv}}{h}\right)r_{uv}
\]
\[
    B = \sum_{u \in N} \omega(R_u)p\left(\frac{r_{uv}}{h}\right)
\]
Then
\[
    \nabla_{\mathbf{x}_t} A = \sum_{u \in N} \omega'(R_u)p\left(\frac{r_{uv}}{h}\right)r_{uv}\nabla_{\mathbf{x}_t} R_u + \left(\frac{1}{h}p'\left(\frac{r_{tv}}{h}\right)r_{tv} + p\left(\frac{r_{tv}}{h}\right)\right)\omega(R_t)\nabla_{\mathbf{x}_t} r_{tv}
\]
\[
    \nabla_{\mathbf{x}_t} B = \sum_{u \in N} \omega'(R_u)p\left(\frac{r_{uv}}{h}\right)\nabla_{\mathbf{x}_t} R_u + \frac{1}{h}\omega(R_t)p'\left(\frac{r_{tv}}{h}\right)\nabla_{\mathbf{x}_t} r_{tv}
\]
and
\begin{equation}
    \nabla_{\mathbf{x}_t} c_v = \frac{\nabla_{\mathbf{x}_t} A}{B} - \frac{A\nabla_{\mathbf{x}_t} B}{B^2}
\end{equation}
For all $u \in N \setminus \{t\}$,
\[
    \nabla_{\mathbf{x}_t} R_u = \left(-\alpha{S}'(\alpha(r_{uv} - r_{tv}))p\left(\frac{r_{tv}}{h}\right) + \frac{1}{h}S(\alpha(r_{uv} - r_{tv}))p'\left(\frac{r_{tv}}{h}\right)\right)\nabla_{\mathbf{x}_t} r_{tv}
\]
Therefore, because $p(1) = 0$ and $p'(1) = 0$, $\nabla_{\mathbf{x}_t} R_u = 0$. Since every term concerning $t$ is also 0, it becomes clear that $\nabla_{\mathbf{x}_t} A = 0$ and $\nabla_{\mathbf{x}_t} B = 0$, so $\nabla_{\mathbf{x}_t} c_v = 0$. This implies that if $t$ moves away and is dropped from the neighbor list, this will not affect the gradient of the cutoff calculation with respect to the position of the atoms. Thus, $c_v$ is differentiable.

In addition, 
\begin{align*}
    H_{\mathbf{x}_t} A &= \sum_{u \in N} p\left(\frac{r_{uv}}{h}\right)r_{uv}(\omega''(R_u)(\nabla_{\mathbf{x}_t} R_u)(\nabla_{\mathbf{x}_t} R_u)^T + \omega'(R_u)H_{\mathbf{x}_t} R_u) \\
    &\ \ \ \ \ + \frac{1}{h}\left(\frac{1}{h}p''\left(\frac{r_{tv}}{h}\right)r_{tv} + 2p'\left(\frac{r_{tv}}{h}\right)\right)\omega(R_t)(\nabla_{\mathbf{x}_t} r_{tv})(\nabla_{\mathbf{x}_t} r_{tv})^T \\
    &\ \ \ \ \ + \left(\frac{1}{h}p'\left(\frac{r_{tv}}{h}\right)r_{tv} + p\left(\frac{r_{tv}}{h}\right)\right)(\omega'(R_t)(\nabla_{\mathbf{x}_t} r_{tv})(\nabla_{\mathbf{x}_t} R_t)^T + \omega(R_t)H_{\mathbf{x}_t} r_{tv})
\end{align*}
\begin{align*}
    H_{\mathbf{x}_t} B &= \sum_{u \in N} p\left(\frac{r_{uv}}{h}\right)(\omega''(R_u)(\nabla_{\mathbf{x}_t} R_u)(\nabla_{\mathbf{x}_t} R_u)^T + \omega'(R_u)H_{\mathbf{x}_t} R_u) \\
    &\ \ \ \ \ + \frac{1}{h}\left(\omega'(R_t)p'\left(\frac{r_{tv}}{h}\right)(\nabla_{\mathbf{x}_t} r_{tv})(\nabla_{\mathbf{x}_t} R_t)^T + \frac{1}{h}\omega(R_t)p''\left(\frac{r_{tv}}{h}\right)(\nabla_{\mathbf{x}_t} r_{tv})(\nabla_{\mathbf{x}_t} r_{tv})^T + \omega(R_t)p\left(\frac{r_{tv}}{h}\right)H_{\mathbf{x}_v} r_{tv}\right)
\end{align*}
Therefore, $H_{\mathbf{x}_t} c_v$ is expressed as
\begin{equation}
    H_{\mathbf{x}_t} c_v = \frac{H_{\mathbf{x}_t} A}{B} - \frac{\nabla_{\mathbf{x}_t} A(\nabla_{\mathbf{x}_t} B)^T}{B^2} - \frac{\nabla_{\mathbf{x}_t} B(\nabla_{\mathbf{x}_t} A)^T + AH_{\mathbf{x}_t} B}{B} + \frac{2A\nabla_{\mathbf{x}_t} B(\nabla_{\mathbf{x}_t} B)^T}{B^3}
\end{equation}
Finally, notice that
\small
\begin{align*}
    H_{\mathbf{x}_t} R_u &= \left(\alpha^2S''(\alpha(r_{uv} - r_{tv}))p\left(\frac{r_{tv}}{h}\right) - \frac{2\alpha}{h}S'(\alpha(r_{uv} - r_{tv}))p'\left(\frac{r_{tv}}{h}\right) + \frac{1}{h^2}S(\alpha(r_{uv} - r_{tv}))p''\left(\frac{r_{tv}}{h}\right)\right)(\nabla_{\mathbf{x}_t} r_{tv})(\nabla_{\mathbf{x}_t} r_{tv})^T \\
    &\ \ \ \ \ + \left(-\alpha{S}'(\alpha(r_{uv} - r_{tv}))p\left(\frac{r_{tv}}{h}\right) + \frac{1}{h}S(\alpha(r_{uv} - r_{tv}))p'\left(\frac{r_{tv}}{h}\right)\right)H_{\mathbf{x}_t} r_{tv}
\end{align*}
\normalsize
Substituting $p(1) = 0$, $p'(1) = 0$, and $p''(1) = 0$ reveals that $H_{\mathbf{x}_t} R_u = 0$ and that overall $H_{\mathbf{x}_t} A = 0$ and $H_{\mathbf{x}_t} B = 0$. Therefore, $H_{\mathbf{x}_t} c_v = 0$. Thus, if $t$ moves away from $v$ and is dropped from $N_v$, it will not affect $H_{\mathbf{x}_t} c_v$, so $c_v$ is always second-order differentiable.

These results imply that $c_v$ is second-order differentiable with respect to the positions of all atoms. Therefore, the message passing mechanism also remains second-order differentiable.
\end{proof}

\section{Hyperparameter Details}
\label{sec:hyperparams}
For reproducibility, we share the hyperparameters used to train each of the models presented in this work. Note that, for comparisons between fixed cutoff and dynamic cutoff versions of the same model, we did not change any hyperparameters other than the cutoff type. Experiments were either performed using 2 H100-80GB GPUs or 1 A6000 GPU, depending on batch size requirements.

\subsection{MACE}
Hyperparameter details found in Table \ref{tab:mace_hyperparams}.
\begin{table}[!h]
    \centering
    \begin{tabular}{ll}
        \hline
        \textbf{Category} & \textbf{Setting} \\
        \hline
        Model type & ScaleShiftMACE \\
        Number of interactions & 2 \\
        Channels & 64 \\
        $L_{\max}$ & 0 \\
        Correlation order & 2 \\
        Cutoff radius & 6.0 \AA \\
        
        Energy loss weight & 10 \\
        Force loss weight & 1000 \\
        SWA force weight & 10 \\
        
        Batch size & 16 \\
        Max epochs & 2000 \\
        Early stopping patience & 15 \\[0.5em]
        
        SWA & start @ 450 \\
        Learning rate & 0.01 \\
        \hline
    \end{tabular}
    \caption{MACE hyperparameters}
    \label{tab:mace_hyperparams}
\end{table}

\subsection{Nequip}
Hyperparameter details found in Table \ref{tab:nequip_hyperparams}.
\begin{table}[!h]
    \centering
    \begin{tabular}{ll}
        \hline
        \textbf{Category} & \textbf{Setting} \\
        \hline
        Cutoff radius & 6.0 \\
        Number of layers & 5 \\
        $l_{max}$ & 1 \\
        Number of features & 28 \\
        Batch size & 32 \\
        Max epochs & 1200 \\
        Gradient clip & 1.0 \\
        Energy weight & 1.0 \\
        Force weight & 10.0 \\
        LR Scheduler & Linear decay \\
        Number of bessels & 8 \\
        Learning rate & 0.01 \\
        \hline
    \end{tabular}
    \caption{Nequip hyperparameters}
    \label{tab:nequip_hyperparams}
\end{table}

\subsection{Orbv3}
Orbv3 does not have an open source implementation to train the model from scratch. Rather, they have a barebones finetuning script. As a result, we wrote our own training script from scratch and open source it. We do not perform diffusion-based pretraining outlined in their original Orbv1 paper. Hyperparameter details found in Table \ref{tab:orbv3_hyperparams}.

\begin{table}[!h]
    \centering
    \begin{tabular}{ll}
        \hline
        \textbf{Category} & \textbf{Setting} \\
        \hline
        Cutoff radius & 6.0 \\
        Number of message passing steps & 5 \\
        Latent Dimension & 128 \\
        Base MLP hidden dimension & 64 \\
        Base MLP depth & 2 \\
        Head MLP hidden dimension & 64 \\
        Head MLP depth & 2 \\
        Batch size & 128 \\
        Gradient clip & 1.0 \\
        Warmup Epochs & 0.1\% \\
        Energy weight & 1.0 \\
        Force weight & 1.0 \\
        Max epochs & 1200 \\
        Learning rate & 3e-3 \\
        \hline
    \end{tabular}
    \caption{Orbv3 hyperparameters}
    \label{tab:orbv3_hyperparams}
\end{table}

\subsection{TensorNet}
Hyperparameter details found in Table \ref{tab:tensornet_hyperparams}. 

\begin{table}[!h]
    \centering
    \begin{tabular}{ll}
        \hline
        \textbf{Category} & \textbf{Setting} \\
        \hline
        Cutoff radius & 6.0 \\
        Number of message passing steps & 6 \\
        Number of units & 64 \\
        Batch size (MD22) & 2 \\
        Batch size (MatPES) & 256 \\
        Gradient clip & 2.0 \\
        Energy weight & 1.0 \\
        Force weight (MD22) & 1000.0 \\
        Force weight (MatPES) & 1.0 \\
        Max epochs & 2000 \\
        Learning rate & 1e-3 \\
        Early stopping patience & 30 epochs\\
        \hline
    \end{tabular}
    \caption{TensorNet hyperparameters}
    \label{tab:tensornet_hyperparams}
\end{table}

\section{Melting Simulation Setup and Further Analysis}
For the copper melting simulation, we initialized the system to contain 864 atoms of copper in an FCC configuration with a lattice constant of 3.61$\si{\angstrom}$. We then used the Effective Medium Potential and Langevin thermostat with friction of 0.01 and a target temperature of 4500K to heat and melt the copper. In addition to Figure \ref{fig:melt_analysis_mini}, we also plot a more comprehensive Figure \ref{fig:melt_analysis_full} which includes more dynamic and fixed cutoff settings. 

\begin{figure}[!h]
  \begin{center}
    \centerline{\includegraphics[clip, trim=45px 5px 50px 20px, width=0.5\columnwidth]{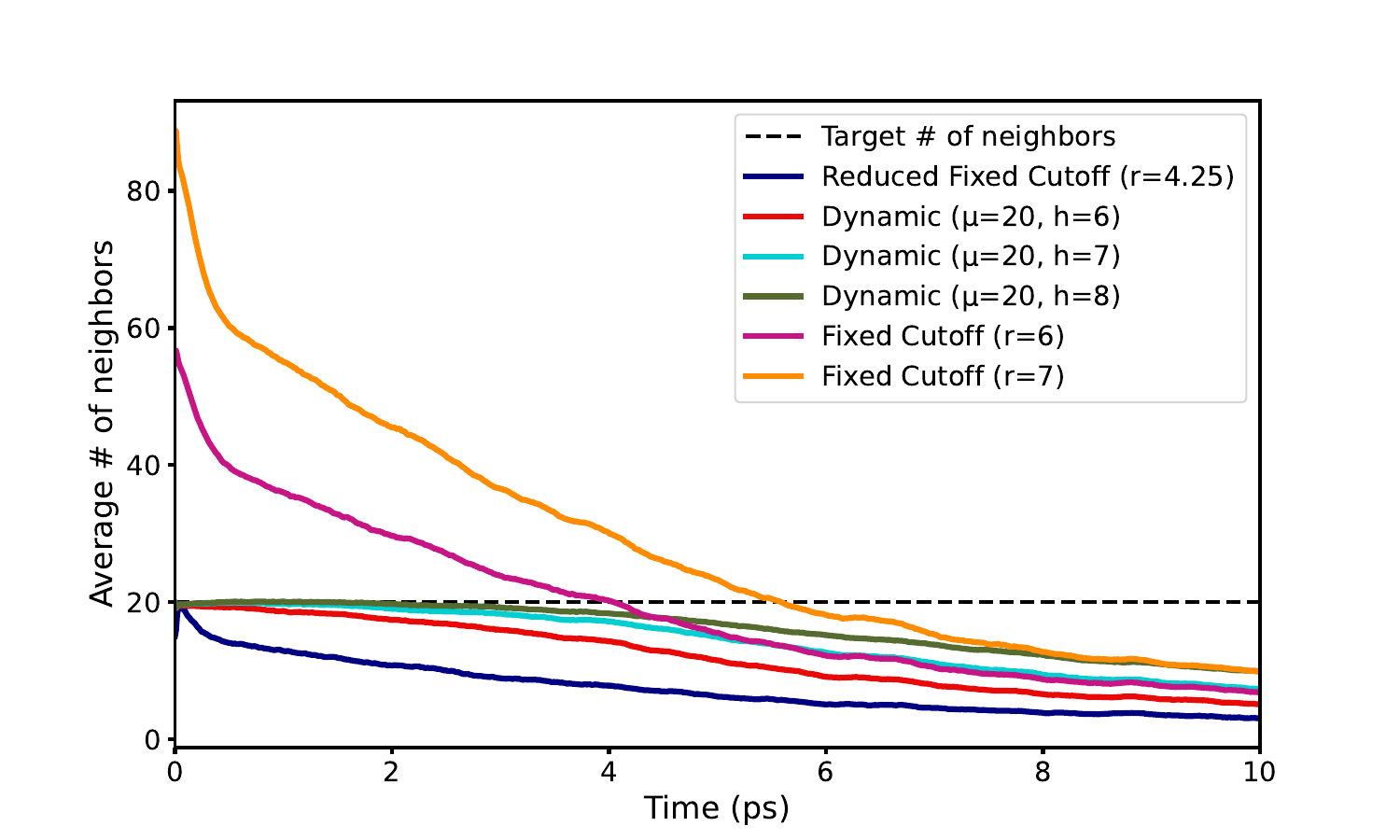}}
    \caption{
    Similar to Figure \ref{fig:melt_analysis_mini}, the average number of neighbors throughout a copper melting simulation at 4500K as a function of time. We plot varying fixed and dynamic cutoff settings. The reduced fixed cutoff of 4.25$\si{\angstrom}$ is chosen specifically to target around 20 neighbors in the bulk state of copper. The plot shows the inability of the reduced fixed cutoff to maintain a meaningful amount of neighbors relative to the unreduced cutoffs and dynamic cutoffs.
    }
    \label{fig:melt_analysis_full}
  \end{center}
\end{figure}

\section{Scaling number of layers}
Depending on the model architecture, increasing the number of layers in the GNN-based MLIP increases the number of many-body interactions that can occur during energy calculation. The increased number of many-body interactions may be more desirable when there is an implicit target to the number of neighbors in an atomic system. We investigate this by training 6 TensorNet models with $\mu=20$ of increasing layer count on the buckyball catcher dataset in MD22 \citep{simeon2023tensornet, chmiela2023md22}. We also fix the total number of parameters in each model to 455k parameters to isolate the effect of increasing layer count on downstream performance. We use the same training parameters and data splits as used in Section \ref{sec:molecules}. The results are presented in Figure \ref{fig:scaling_layers}. As seen from the figure, there is little noticeable effect of increasing layer count on either the force or energy errors. 

\begin{figure}[!h]
  \begin{center}
    \centerline{\includegraphics[clip, trim=10px 5px 10px 10px, width=0.5\columnwidth]{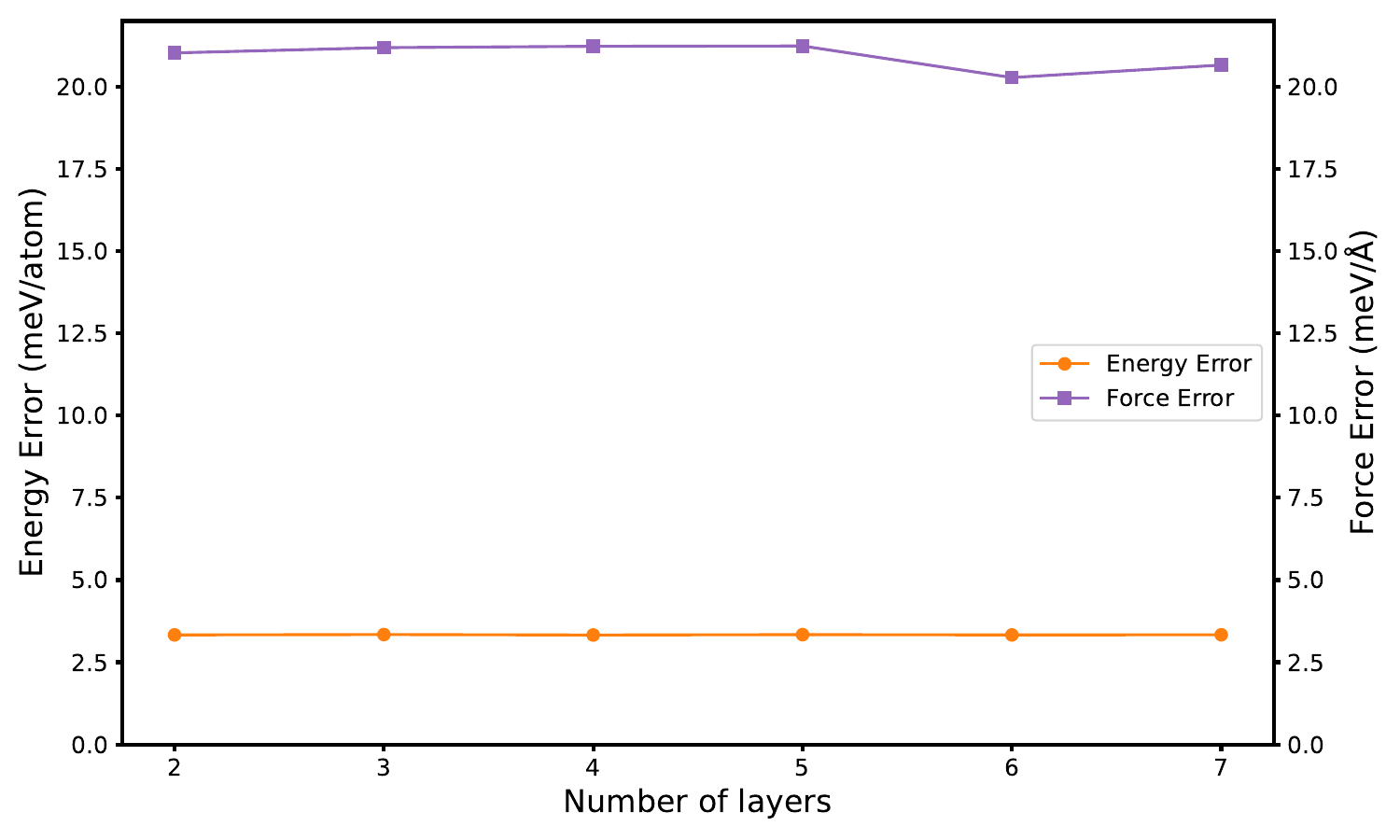}}
    \caption{
    Scaling the number of layers in a dynamic cutoff TensorNet model (while keeping the total number of parameters fixed) with $\mu=20$ on the buckyball catcher subset from the MD22 dataset. All else being equal, there is no noticeable accuracy difference between a dynamic cutoff model with a large amount of layers and a dynamic cutoff model with a smaller amount of layers.
    }
    \label{fig:scaling_layers}
  \end{center}
\end{figure}

\section{A Note on Neighbor Ranks}
\label{sec:neighbor_ranks}
In Figure \ref{fig:rankings_parity}, for a random atom within a perturbed SiO2 supercell system consisting of 243 atoms, we show a parity plot between the dynamic ranks calculated in Equation \ref{eq:ranking} with respect to the true neighbor ranks. Notice how the dynamic ranks roughly follow the true neighbor ranks until the true neighbor rank becomes very large. This is a result of the $p(\frac{r_{tv}}{h})$ term within the summation. The term is maintains that, as the $t$ atom reaches the hard cutoff, the overall contribution of $t$ to the neighbor $u$ of atom $v$ approaches 0. Intuitively, this is to prevent the neighbor ranking of $u$ to change change if $t$ leaves the hard cutoff.

\begin{figure}
  \begin{center}
    \centerline{\includegraphics[clip, trim=0px 0px 0px 0px, width=0.5\columnwidth]{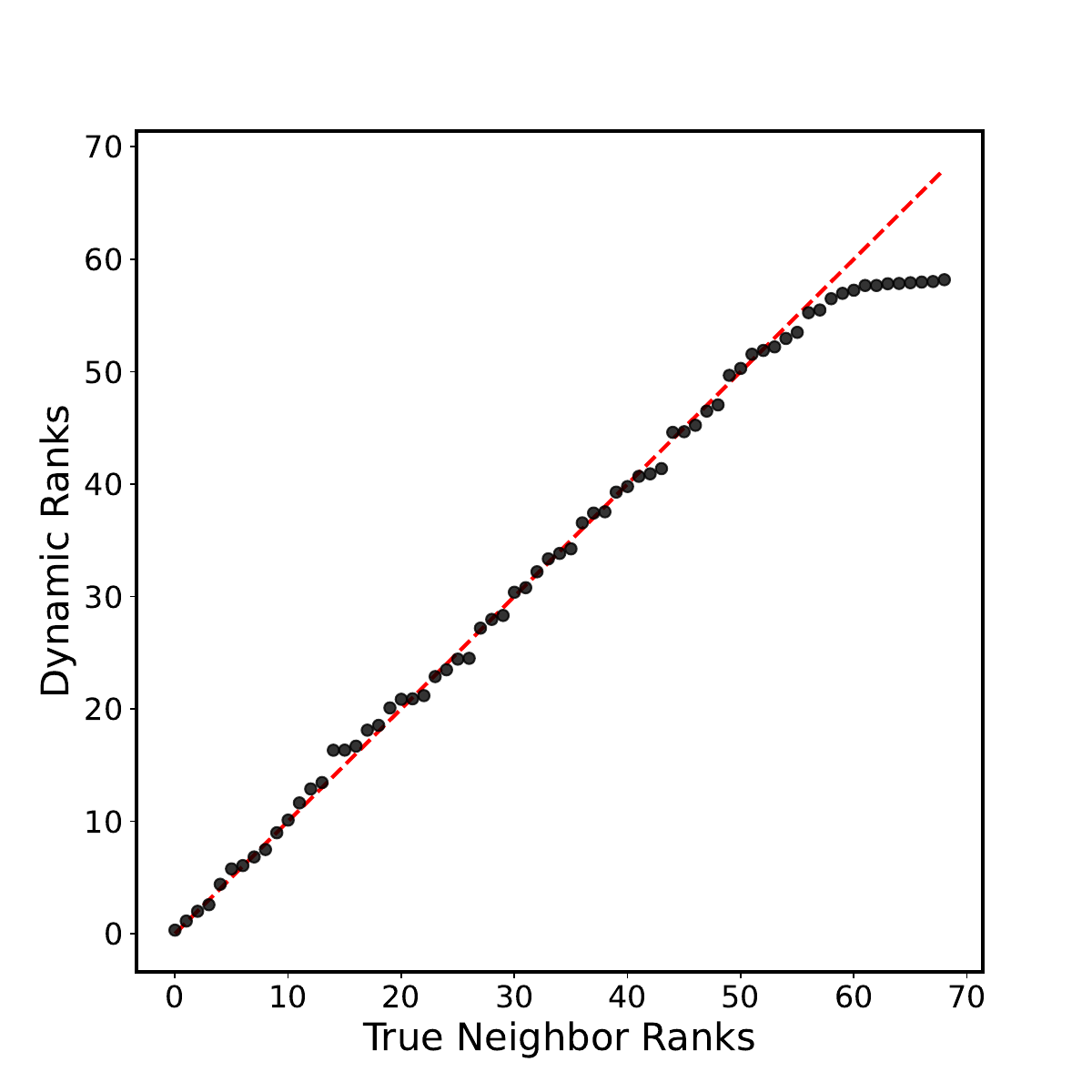}}
    \caption{
        We show the dynamic ranks calculated by Equation \ref{eq:ranking} compared to the true neighbor ranks for the neighbors of a randomly selected atom in a 243 atom, randomly perturbed SiO2 supercell. 
    }
    \label{fig:rankings_parity}
  \end{center}
\end{figure}

\section{Per-element Force MAE}
\label{sec:per_element_mae}
We plot the relative increase in per-element force MAE between a dynamic cutoff model and its fixed cutoff counterpart in Figure \ref{fig:element_mae}. The model used were the TensorNet models trained in Section \ref{sec:materials}. 

\begin{figure}[h]
  \begin{center}
    \centerline{\includegraphics[clip, trim=0px 0px 0px 0px, width=0.5\columnwidth]{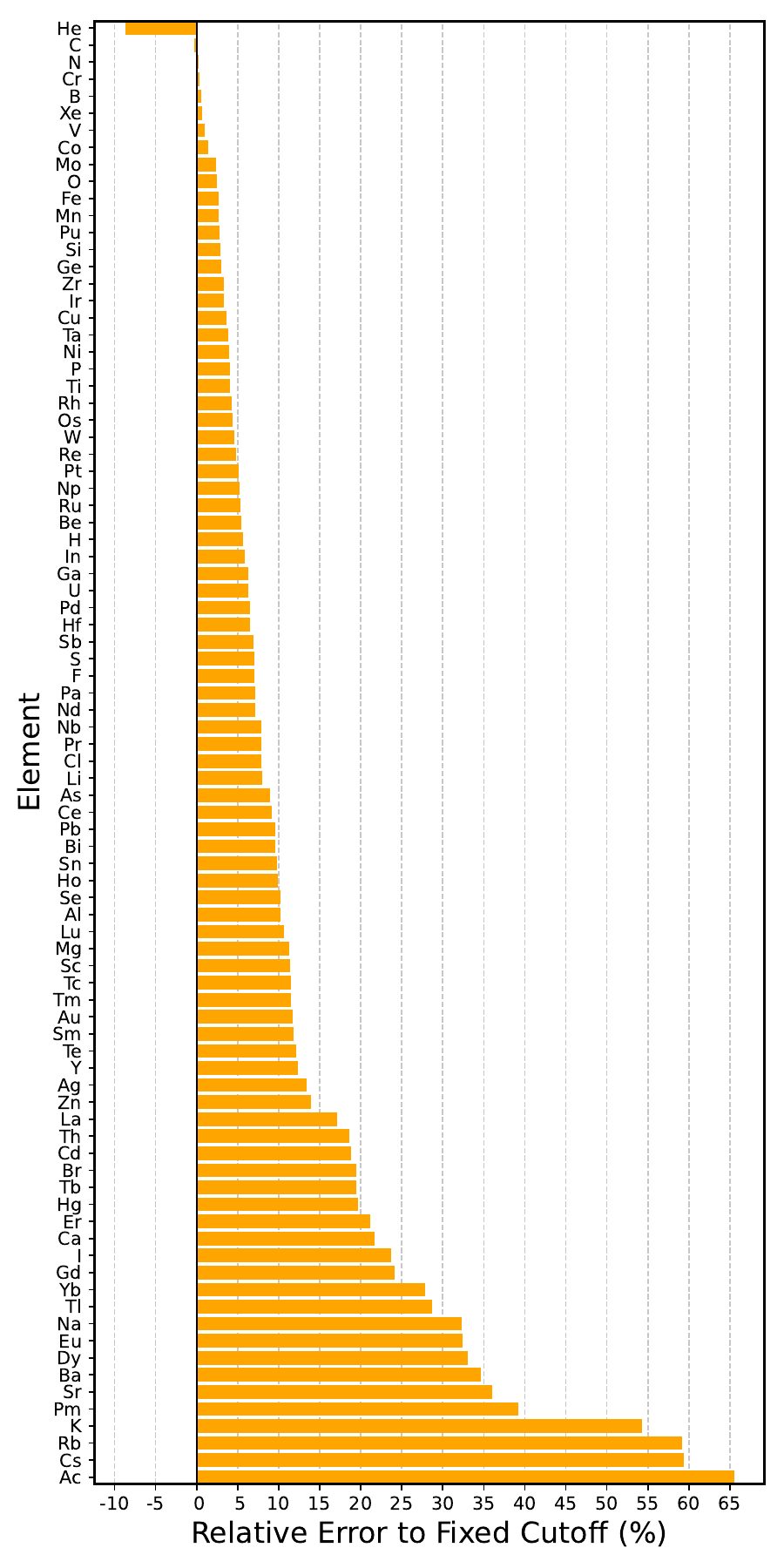}}
    \caption{
    A element-by-element breakdown of the relative force error increase of the dynamic cutoff model compared to the fixed cutoff model baseline for the TensorNet ablation trained on the MatPES dataset (in section \ref{sec:materials}). The target number of neighbors, $\mu$, is 40. Elements with the largest atomic radii consistently show higher force error increases than smaller elements.
    }
    \label{fig:element_mae}
  \end{center}
\end{figure}

\end{document}